\NeedsTeXFormat{LaTeX2e}

\documentclass{eptcs}

\usepackage{pdfsync}
\usepackage{amsmath,amssymb,latexsym,amsthm}
\usepackage[utf8]{inputenc}
\usepackage{textcomp}
\usepackage{enumerate}
\usepackage{url}
\usepackage{wrapfig}

\title {Non-simplifying Graph Rewriting Termination}
\author {
Guillaume Bonfante
\institute{LORIA\\Université de Lorraine}
\and Bruno Guillaume
\institute{LORIA\\Inria Nancy Grand-Est}
}

\date{January 2012}

\usepackage{amsmath}
\usepackage{bbm}
\usepackage{yfonts}
\usepackage{mathtools}

\usepackage[pdftex]{graphicx}

\usepackage{tikz}
\usetikzlibrary{calc,arrows}
\tikzstyle{etat}=[circle, minimum size=1.2cm, draw=black]

\newcommand{\shiftf}{{\Phi}}

\newcommand{\txt}[1]{{\it "#1"}}

\newcommand{\arete}[3]{{#1 \stackrel{#2}{\longrightarrow} #3}}

\renewcommand{\rule}[2]{{\langle #1, #2\,\rangle}}

\newcommand{\dhe}{{{h}}}

\newcommand{\Z}{{\mathbb{Z}}}
\newcommand{\N}{{\mathbb{N}}}
\newcommand{\Id}{{\mathbbm{1}}}

\renewcommand{\bigg}{\textswab{G}}
\newcommand{\prog}{{\cal G}}

\newcommand{\size}[1]{{|#1|}}
\newcommand{\sizevec}[1]{{|\vec{#1}\,|}}

\newcommand{\elabs}{{\Sigma_E}}

\newcommand{\nlabs}{{\Sigma_N}}


\newcommand{\noin}{{\bar{\cal{I}}}}
\newcommand{\noout}{{\bar{\cal{O}}}}
\newcommand{\noedges}{{\bar{\cal{E}}}}

\newif\ifname
\nametrue

\newcommand{\nodes}{{\cal{N}}}
\newcommand{\edges}{{\cal{E}}}

\newcommand{\pattim}{{{\cal P}}}
\newcommand{\crown}{{{\cal K}}}
\newcommand{\context}{{{\cal C}}}
\newcommand{\contextcomp}{{{\cal Q}}}
\newcommand{\glued}{{{\cal H}}}

\newcommand{\ie}{{\it i.e.~}}

\newtheorem{lemma}{Lemma}[section]
\newtheorem{definition}{Definition}[section]
\newtheorem{example}{Example}[section]

\newtheorem{theorem}{Theorem}[section]

\newcommand{\ren}[2]{{\tt label} ({#1},{#2})}
\newcommand{\add}[3]{{\tt add\_edge} ({#1},{#2},{#3})}
\newcommand{\del}[3]{{\tt del\_edge} ({#1},{#2},{#3})}
\newcommand{\shift}[2]{{\tt shift} ({#1},{#2)}}
\newcommand{\delnode}[1]{{\tt del\_node} ({#1})}
\newcommand{\xren}{{\tt label}}
\newcommand{\xadd}{{\tt add\_edge}}
\newcommand{\xdel}{{\tt del\_edge}}
\newcommand{\xshift}{{\tt shift}}
\newcommand{\xdelnode}{{\tt del\_node}}

\newcommand\restr[2]{\ensuremath{\left.#1\right|_{#2}}}

\begin{document}

\label{firstpage}
\maketitle

\begin{abstract}
So far, a very large amount of work in Natural Language Processing (NLP) rely on trees as the core mathematical structure to represent linguistic informations (e.g. in Chomsky's work).
However, some linguistic phenomena do not cope properly with trees.
In a former paper, we showed the benefit of encoding linguistic structures by graphs and of using graph rewriting rules to compute on those structures.
Justified by some linguistic considerations, graph rewriting is characterized by two features: first, there is no node creation along computations and second, there are non-local edge modifications.
Under these hypotheses, we show that uniform termination is undecidable and that non-uniform termination is decidable.
We describe two termination techniques based on weights and we give complexity bound on the derivation length for these rewriting systems.
\end{abstract}



\section{Introduction}

Linguists introduce different levels to describe a natural language sentence.
Starting from a sentence given as a sequence of sounds or as a sequence of words; among the linguistic levels, two are deeply considered in literature:
the syntactic level (a grammatical analysis of the sentence) and the semantic level (a representation of the meaning of the sentence).
These two representations involve mathematical structures such as logical formulae, $\lambda$-terms, trees and graphs.

\begin{wrapfigure}{r}{68mm}
\includegraphics[scale=.35]{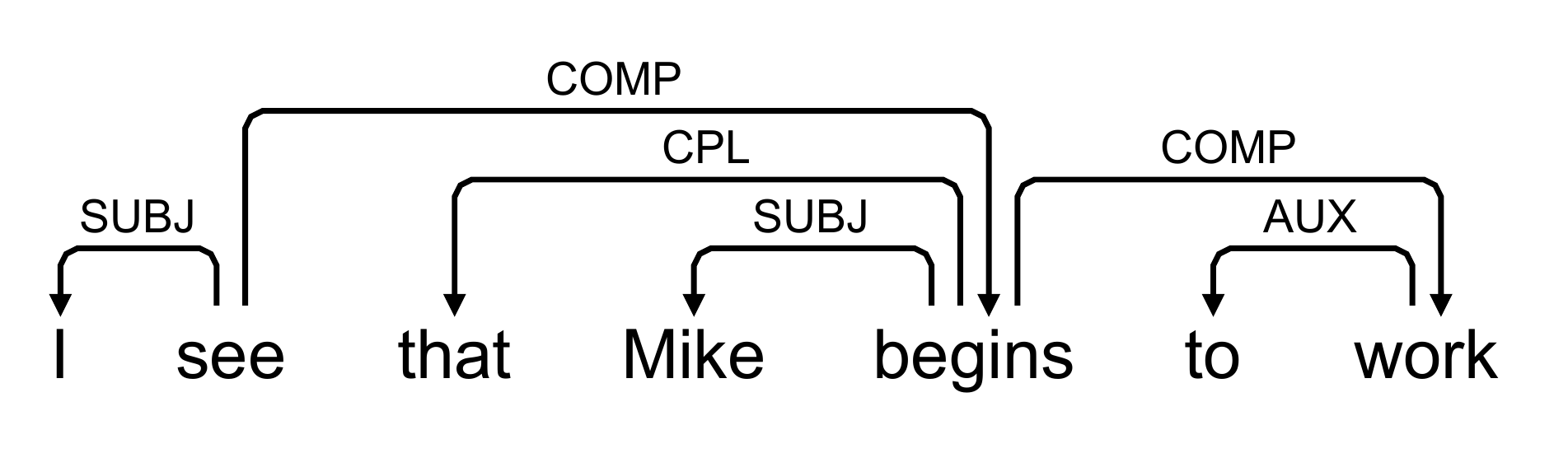}
\end{wrapfigure}
One of the usual ways to describe syntax is to use the notion of dependency~\cite{elements}.
A dependency structure is an ordered sequence of words, together with some relations between these words.
For instance, the sentence \txt{I see that Mike begins to work} can be represented by the structure on the right.

There is a large debate in the literature about the mathematical nature of the structures needed for natural language syntax: do we have to consider trees or graphs?
Trees are often considered for their simplicity; however, it is clearly insufficient.
Let us illustrate the limitations of tree-representations with some linguistic examples.
Consider the sentence \txt{Bill expects Mary to come}, the node \txt{Mary} is shared, being the subject of \txt{come} and the object of \txt{expects} (below on the left).
The situation can be even worse: cycles may appear such as in the sentence below where edges in the cycle are drawn with dashed line (below on the right).

\includegraphics[scale=.35]{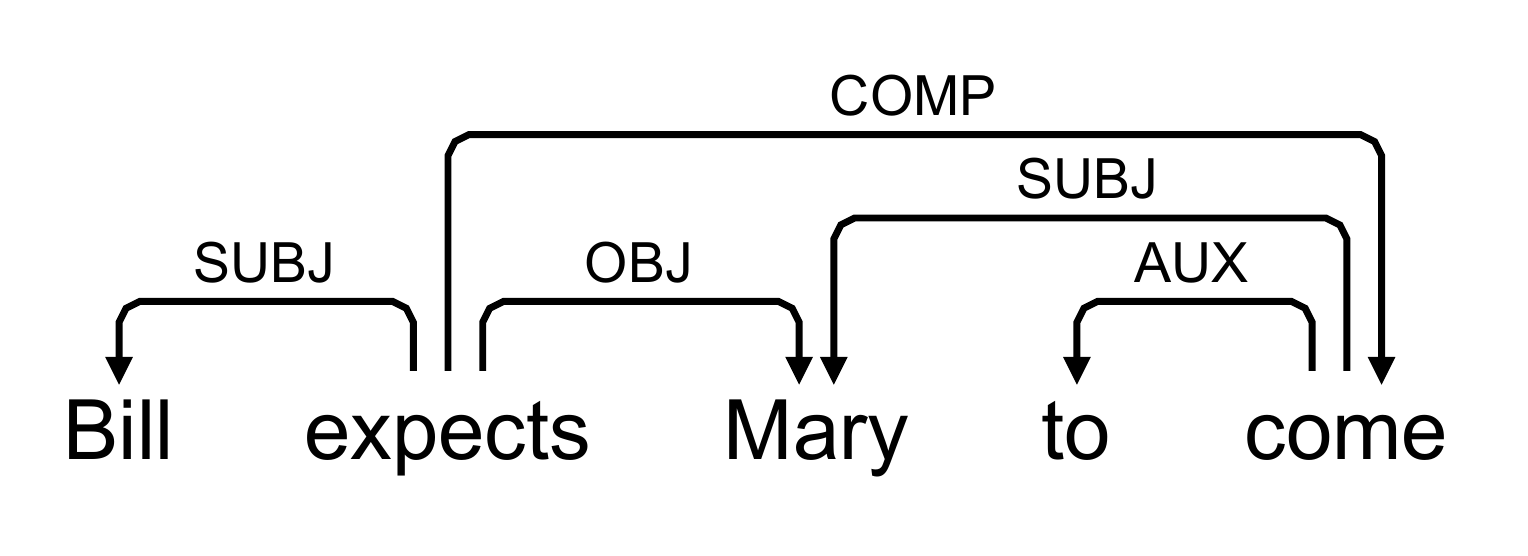} \qquad \includegraphics[scale=.35]{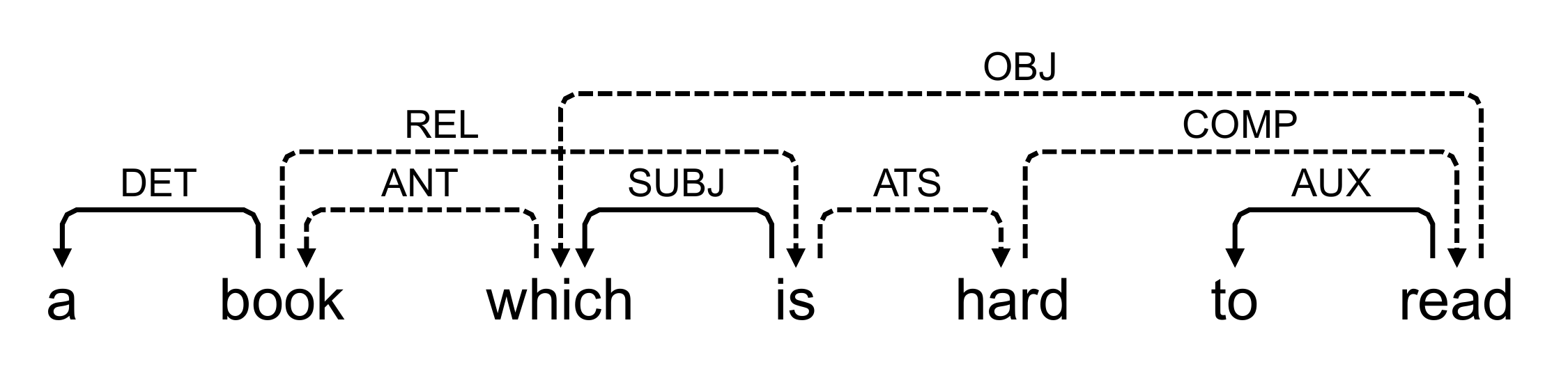}

\newpage
\begin{wrapfigure}{r}{96mm}
\begin{tikzpicture}[->,transform shape, minimum size=10, scale=0.9]]
\tikzstyle{dmrs}=[shape=rectangle, draw=black]
\node[dmrs] (byte) at (5,5) {\sc{byte}$_v$};

\node[dmrs] (the2) at (2,5) {\sc{the}$_q$};
\node[dmrs] (cat) at (3,3) {\sc{cat}$_n$};
\path (the2) edge node[sloped, above] {\tiny RSTR} (cat);

\node[dmrs] (the1) at (2,2) {\sc{the}$_q$};
\node[dmrs] (dog) at (3,0) {\sc{dog}$_n$};
\path (the1) edge node[sloped, above] {\tiny RSTR} (dog);

\node[dmrs] (bark) at (0,2) {\sc{bark}$_v$};
\path (bark) edge node[sloped, above] {\tiny ARG1} (dog);

\node[dmrs] (poss) at (5,2) {\it{poss}$_v$};
\path (poss) edge [bend right=10] node[sloped, above] {\tiny ARG2} (dog);
\node[dmrs] (toy) at (8,0) {\sc{toy}$_n$};
\path (poss) edge node[sloped, above] {\tiny ARG1} (toy);
\path (byte) edge node[sloped, above] {\tiny ARG1} (cat);
\path (byte) edge node[sloped, above] {\tiny ARG2} (toy);
\path (byte) edge [-,dashed] node[sloped, above] {\tiny EQ} (dog);
\path (poss) edge [-,dashed, bend left=10] node[sloped, below] {\tiny EQ} (dog);

\node[dmrs] (def) at (9,2) {\it{def}$_q$};
\path (def) edge node[sloped, above] {\tiny RSTR} (toy);
\end{tikzpicture}
  \label{fig:dmrs}
 \end{wrapfigure}
For the semantic representation of natural language sentences, first order logic formulae are widely used.
To deal with natural language ambiguity, a more compact representation of a set of logic formulae (called underspecified semantic representation) is used.
DMRS~\cite{copestake} is one of these compact representation.
The DMRS structure for the sentence \txt{The Dog whose toy the cat bit barked} is given in the figure on the right.

To describe transformations between syntactic and semantics structures, there are solutions based on many computational models (finite state automata, $\lambda$-calculus).
It is somewhat surprising that Graph Rewrite Systems (GRS) have been hardly considered so far (\cite{Hyvonen84, Bohnet01, Crouch05, Jijkoun07}).
To explain that, GRS implementations are usually considered to be too inefficient to justify their extra-generality.
For instance, pattern matching does not take linear time where this is usually seen as an upper limit for fast treatment.

However, if one drops for a while the issue of efficiency, the use of GRS is promising.
Indeed, linguistic considerations can be most of the time expressed by some relations between a few words.
Thus, they are easily translated into rules.
To illustrate this point, in~\cite{iwcs11, treebanking12}, we proposed a syntax to semantics translator based on GRSs: given the syntax of a sentence, it outputs the different meaning associated to this syntax.

In the two earlier mentioned studies, we tried to delineate what are the key features of graph rewriting in the context of NLP.
Roughly speaking, node creation are strictly restricted, edges may be shifted from one node to another and there is a need for negative patterns.
Based on this analysis, we define here a suitable framework for NLP (see Section~\ref{sec:def}).

Compared to term rewriting, the semantics of graph rewriting is problematic: different choices can be made in the way the context is glued to the rule application~\cite{dpo_spo}.
As far as we see, our notion does not fit properly the DPO approach due to unguarded node deletion nor the SPO approach due to the shift command, as we shall see.
Thus we will provide a complete description of our notion.
We have chosen to present it in an operational way and we leave for future work a categorial semantics.

In our application, we use several hundreds of rules.
To manage such a system, we use a notion of modular graph rewriting system: the full set of rules is divided in smaller subsets (called modules) that are used in turn.

In practice, we need some tools to verify termination and confluence properties of modules.
In Section~\ref{sec:term}, we provide two termination methods based on a weight analysis.
First, there is a direct motivation: in our NLP application, any computations should terminate.
If it is not the case, it means that the rules where not correctly drawn.
Then, termination ensures partly the correctness of the transformation.
There is also an indirect reason to consider termination: one way of establishing confluence is through Newman's Lemma~\cite{N42} which requires termination.

We consider two properties of the above mentioned termination methods.
First, we show that they are decidable, that is the existence of weights can be computed statically from the rules, and thus we have a fully automatic tool to verify termination.
Obviously, it is not complete.
In a second step, we evaluate the strength of the two methods.
To do that, we consider what restrictions they impose on the length of computations.
We get quadratic time for the first method, polynomial time for the second.
This article is an extended abstract of~\cite{MSCS2013}.
\section{Linguistic motivations}
\label{sec:nlp}

Without any linguistic exhaustivity, we highlight in this section some crucial points of the kind of linguistic transformation we are interested in and hence the relative features of rewriting we have to consider.

\paragraph{Node preservation property.}
As linguistic examples above suggest, the goal of linguistic analysis is mainly to describe different kinds of relations between elements that are present in the input structure.
As a consequence, the set of nodes in the output structure is directly predictable from the input and only a very restrictive notion on node creation is needed.
In practice, these node creations can be anticipated in some enriched input structure on which the whole transformation can be described as a non-size increasing process.

\begin{wrapfigure}[4]{r}{65mm}
\vspace{-7mm}
  \includegraphics[scale=.3]{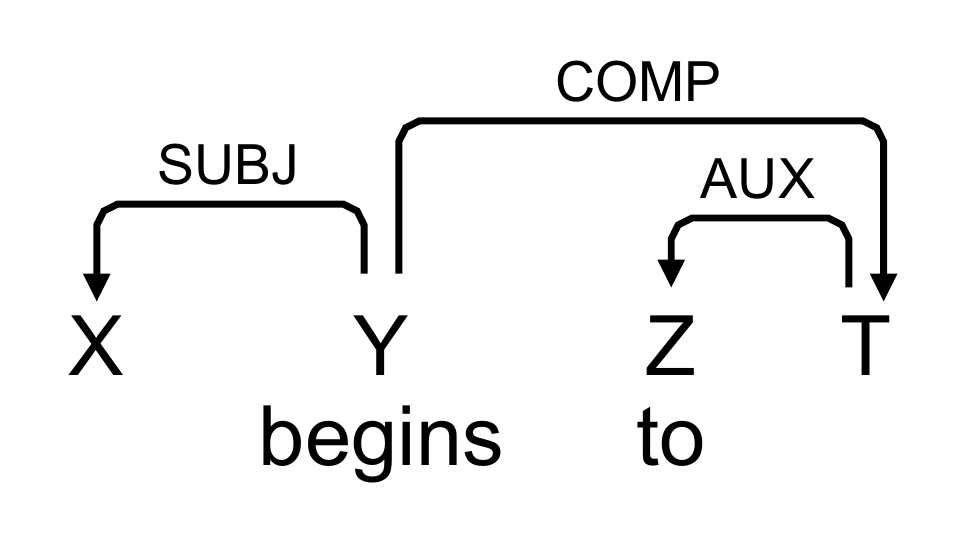}  \quad \includegraphics[scale=.3]{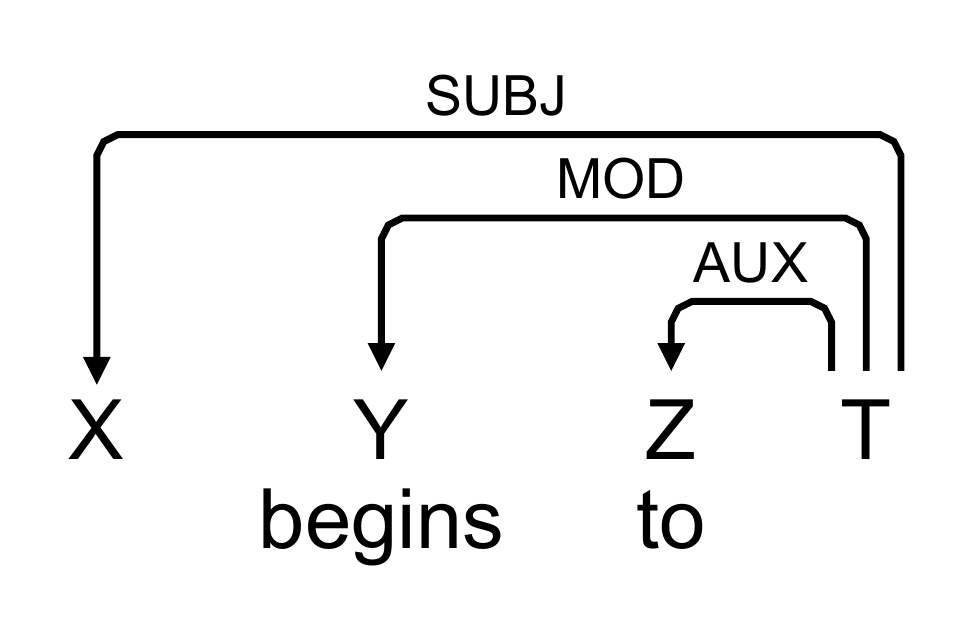}
\end{wrapfigure}
\paragraph{Edge shifting.}
In the first example of the introduction (for the sentence \txt{I see that Mike begins to work}), the verb \txt{begins} is called a raising verb and we know that \txt{Mike} is the \emph{deep subject} of the verb \txt{work}; \txt{begins} being considered as a modifier of the verb.
To recover this deep subject, one may imagine a local transformation of the graph which turns the first graph on the right into the second one.

However, in our example above, a direct application of such a transformation leads to the structure below on the left which is not the right structure.
Indeed, the transformation should shift what the linguists call the head of the phrase \txt{Mike begins to work} from the word \txt{begins} to the word \txt{work} with all relative edges.
In that case, the transformation should produce the structure below on the right:

\begin{center}
  \includegraphics[scale=.35]{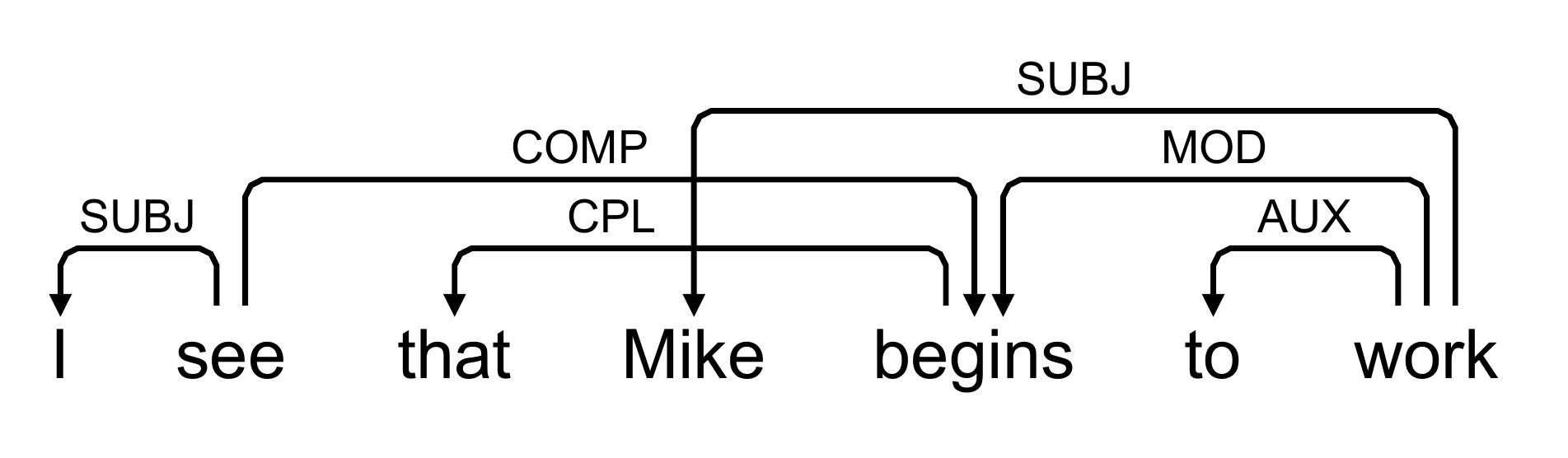} \qquad
  \includegraphics[scale=.35]{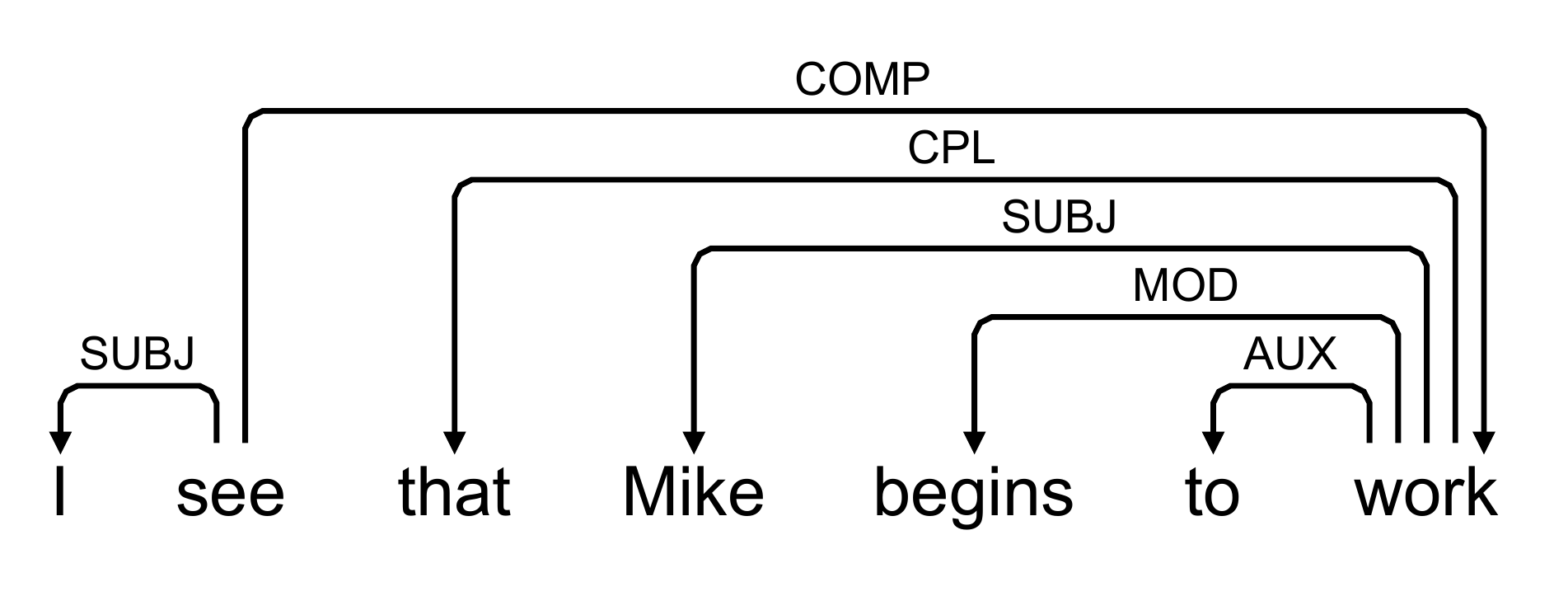}
\end{center}

In a more general setting, our transformations may have to specify the fact that all incident edges of some node $X$ must be transported to some other node $Y$.
We call this operation $\xshift$.

To describe our graph rewriting rules, we introduce a system of commands (like in~\cite{echahed}) which expresses step by step the modifications applied on the input graph.
The transformation described above is performed in our setting as follows:
\medskip
\begin{center}
\begin{tabular}[c]{|c|c|}
  \raisebox{-8mm}{\includegraphics[scale=.35]{pattern_1.pdf}} &
  \begin{tabular}[c]{l}
    \del{Y}{SUBJ}{X};  \del{Y}{COMP}{T}; \\
    \add{T}{SUBJ}{X}; \add{T}{MOD}{Y}; \\
    \shift{Y}{T}
  \end{tabular}
\end{tabular}
\end{center}

\paragraph{Negative conditions.}
In some situation, rules must be aware of the context of the pattern to avoid unwanted ambiguities.
When computing semantics out of syntax, one has to deal with passive sentence; the two sentences below show that the agent is optional.

\begin{center}
  \includegraphics[scale=.35]{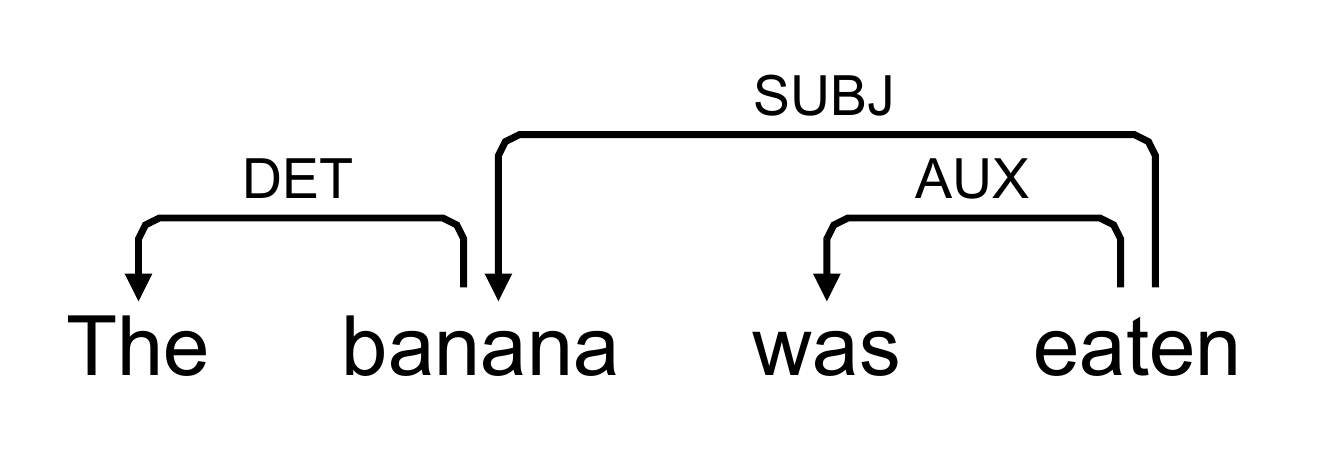} \qquad \includegraphics[scale=.35]{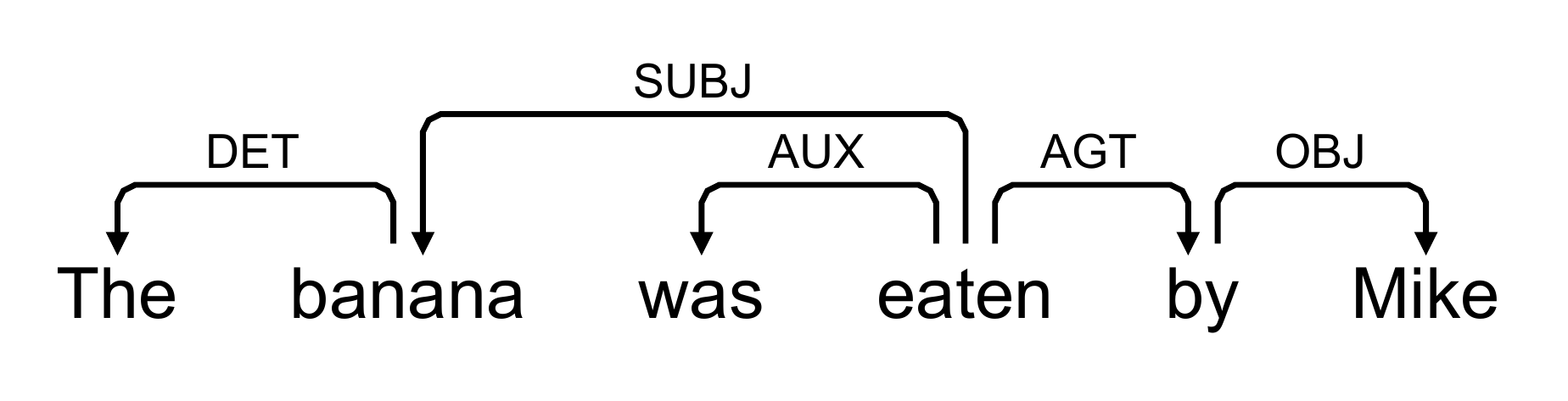}
\end{center}

In order to switch to the corresponding active form, two different linguistic transformations have to be defined for these two sentence; but, clearly, the first graph is a subgraph of the second one.
We don't want the transformation for the short passive on the left to apply on the long passive on the right.
we need to express a negative condition like ``there is no out edge labeled by AGT out of the main verb'' to prevent the unwanted transformation to occur.

\begin{wrapfigure}[4]{r}{80mm}
 \vspace{-7mm} \includegraphics[scale=.35]{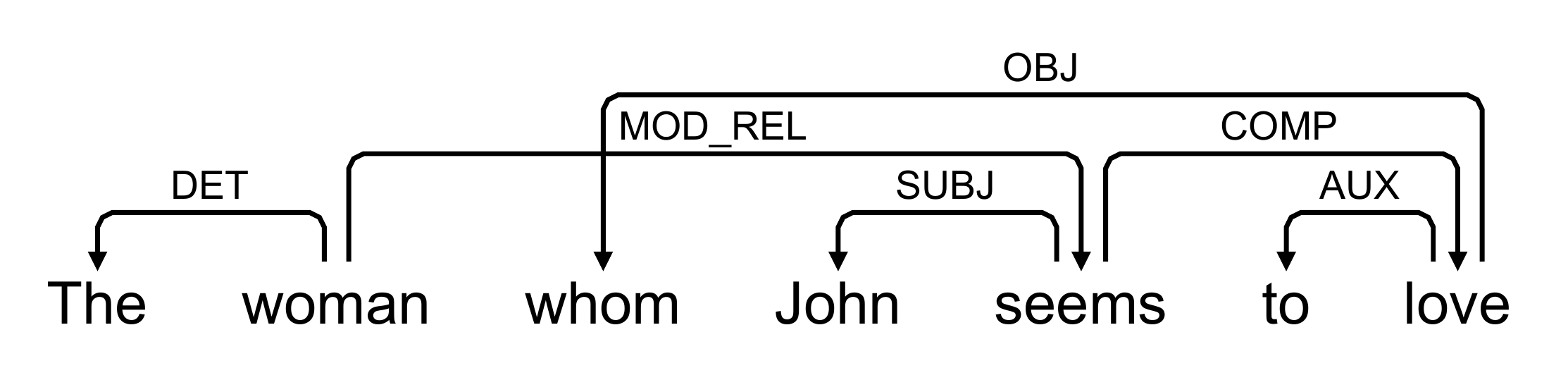}
\end{wrapfigure}
\paragraph{Long distance dependencies.}
\label{ex:ant}
Most of the linguistic transformation can be expressed with successive local transformation like the one above.
Nevertheless, there are some cases where more global rewriting is required; consider the sentence \txt{The women whom John seems to love}, for which we consider the syntactic structure on the right.
One of the steps in the semantic construction of this sentence requires to compute the antecedent of the relative pronoun \txt{whom} (the noun \txt{woman} in our example).

\begin{wrapfigure}{r}{70mm}
\vspace{-7mm}\begin{tabular}{c}
  \raisebox{-8mm}{\includegraphics[scale=.35]{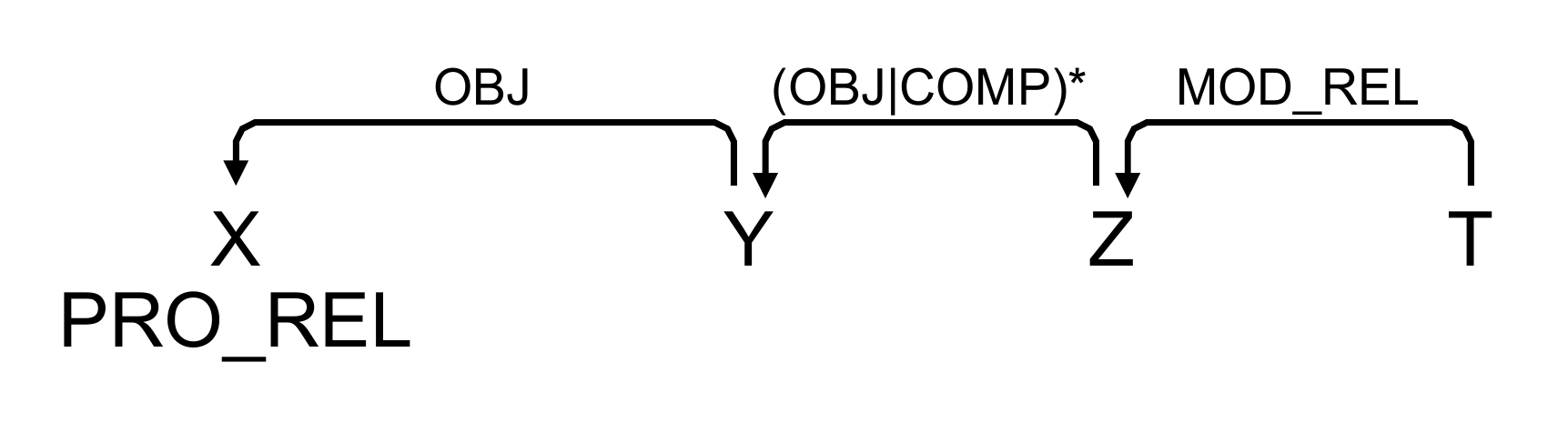}}\\ \add{X}{ANT}{T}
\end{tabular}
\end{wrapfigure}
The subgraph we have to search in our graph (which is depicted as a non-local pattern) and the graph modification to perform are given on the right.
The number of OBJ or COMP relations to consider (in the relation depicted as (OBJ$\mid$COMP)* in the figure) is unbounded (in linguistics, this phenomenon is called long distance dependencies); it is possible to construct grammatical sentences with an arbitrary large number of relations.

As we want to stay in the well-known framework of local rewriting, we will use several local transformations to implement such a non-local rule.

\begin{small}
  \begin{center}
    \begin{tabular}{l}
      \includegraphics[scale=.35]{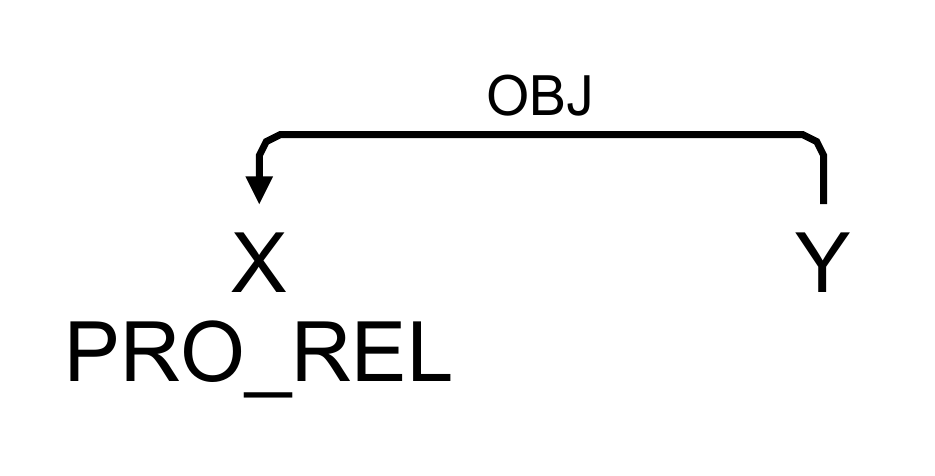} \\
      \add{X}{TMP}{Y} \\
    \end{tabular}
    \begin{tabular}{l}
      \includegraphics[scale=.35]{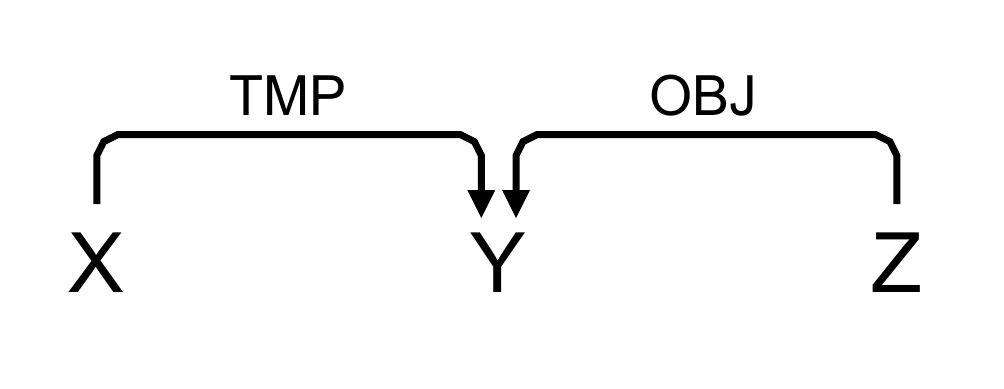} \\
      \del{X}{TMP}{Y} \\
      \add{X}{TMP}{Z} \\
    \end{tabular}
    \begin{tabular}{l}
      \includegraphics[scale=.35]{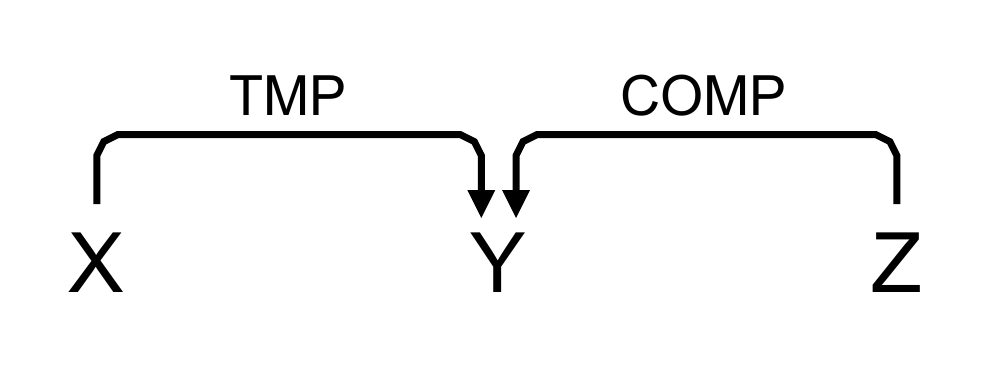} \\
      \del{X}{TMP}{Y} \\
      \add{X}{TMP}{Z} \\
    \end{tabular}
    \begin{tabular}{l}
      \includegraphics[scale=.35]{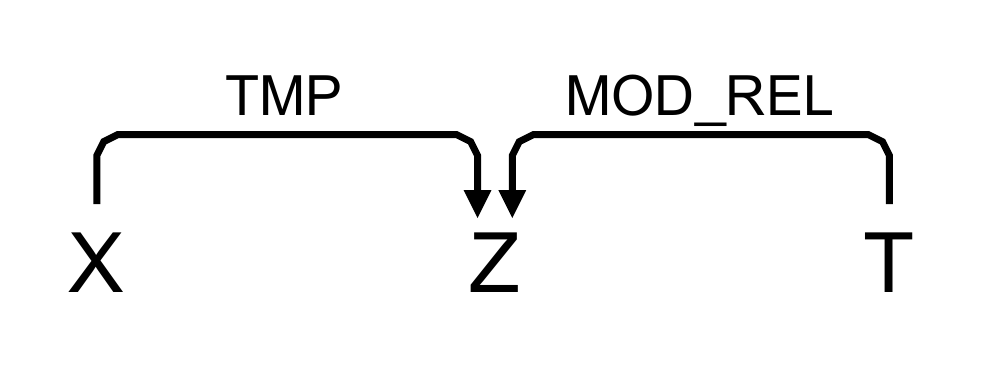} \\
      \del{X}{TMP}{Z} \\
      \add{X}{ANT}{T} \\
    \end{tabular}
  \end{center}
\end{small}
\medskip
The second and the third rules above preserve the set of nodes and the number of edges of each kind.
Hence, this kind of rule will require special treatment with respect to termination issues.

\section{Graph Rewriting for NLP}
\label{sec:def}

Before we enter into the technical sections, let us define some useful notations. First, we use the notation $\vec{c}$ to denote sequences. The empty sequence is written $\emptyset$. The length of a sequence is denoted by $\sizevec{c}$. We use the same notation for sets: the empty set is denoted $\emptyset$ and the cardinality of a set $S$ is written $\size{S}$. The context will make clear whether we are talking about sequences or sets.

Given a function $f: X \to Y$ and some sets $X' \subseteq X$ and $Y' \subseteq Y$, we define $f(X') \triangleq \{ f(x) \mid x \in X'\}$ and $f^{-1}(Y') \triangleq \{x \in X \mid f(x) \in Y'\}$;  the restriction of the function $f$ to the domain $X'$ is $\restr{f}{X'} : x' \in X' \mapsto f(x')$.
The function $\mathbf{c}_X : x \in X \mapsto c \in Y$ is the constant function on $X$.
The identity function is written $\Id$.
Finally, given a function $f : X \to Y$ and $(x,y) \in X \times Y$, the function $f[x \mapsto y]$ maps $t \neq x$ to $f(t)$ and $x$ to $y$.

The set of natural numbers is $\N$, integers are denoted by $\Z$.
Given two integers $a, b$, we define $[a,b] = \{ x \in \Z \mid a \leq x \leq b\}$.
\subsection{Graphs}

The graphs we consider are directed graphs with both labels on nodes and labels on edges.
We restrict the edge set: given some edge label $e$, there is at most one edge labeled $e$ between two given nodes $\alpha$ and $\beta$.
This restriction reflects the fact that, in NLP application, our edges are used to encode linguistic information which are relations.
We make no other explicit hypothesis on graphs: in particular, graphs may be disconnected, or have loops.

In this paper, we suppose given a finite set $\elabs$ of edge labels and another finite set $\nlabs$  of node labels.

\begin{definition}[Graph]
A \emph{graph} $G$ is defined as a triple $(\nodes, \ell, \edges)$ where
  \begin{itemize}
  \item $\nodes$ is a finite set of nodes;
  \item $\ell$ is a labeling function: $\ell : \nodes \mapsto \nlabs$;
  \item $\edges$ is a set of edges: $\edges \subseteq \nodes \times \elabs \times \nodes$.
  \end{itemize}
\end{definition}

Let $n, m \in \nodes$ and $e \in \elabs$.
When there is an edge from $n$ to $m$ labelled $e$ (\ie $(n,e,m) \in \edges$), we write $\arete{n}{e}{m}$ or $\arete{n}{}{m}$ if the edge label is not relevant.
If $G$ denotes some graph $(\nodes, \ell, \edges)$, then $\nodes_G$, $\ell_G$, $\edges_G$ denote respectively $\nodes, \ell$ and $\edges$.

\begin{definition}[Graph morphism]
A \emph{graph morphism} $\mu$ from the graph $G = (\nodes, \ell, \edges)$ to the graph $G' = (\nodes', \ell', \edges')$ is a function from $\nodes$ to $\nodes'$ such that:

\begin{itemize}
\item for all $n \in \nodes$, $\ell'(\mu(n)) = \ell(n)$;
\item for all $n, m \in \nodes$ and $e \in \elabs$, if $\arete{n}{e}{m} \in \edges$ then $\arete{\mu(n)}{e}{\mu(m)} \in \edges'$.
\end{itemize}

A graph morphism $\mu$ is said to be \emph{injective} if $\mu(n) = \mu(m)$ implies $n=m$. 
 We make the following abuse of notation: given some graph morphism $\mu:G \to G'$, and a set $E \subseteq \edges_G$, we let $\mu(E) = \{\arete{\mu(n)}{e}{\mu(m)} \mid \arete{n}{e}{m} \in E \}$.
\end{definition}


\begin{definition}[Basic pattern and basic matching]
  A \emph{basic pattern} $B$ is a graph.
  A \emph{basic matching} $\mu$ of the basic pattern $B$ in the graph $G$ is an \emph{injective} graph morphism $\mu$ (written $\mu: B \hookrightarrow G$).
\end{definition}

As shown in Section~\ref{sec:nlp},  negative conditions on patterns naturally arise in NLP.
We classify negative conditions in two categories: the local ones, that is negative conditions on edges within the basic pattern and non-local ones, that is negative conditions concerning edges between a node of the basic pattern and a node of the context (either in-edges or out-edges).

\begin{definition}[Pattern]
  A \emph{pattern} is a quadruple $P = (B, \noedges, \noin, \noout)$ of:
  \begin{itemize}
  \item a basic pattern $B = (\nodes_P, \ell_P, \edges_P)$;
  \item a set of forbidden edges $\noedges \subset  \nodes_P \times \elabs \times \nodes_P$ such that  $\noedges \cap \edges_P = \emptyset$;
  \item a set of forbidden in-edges $\noin \subset  \nodes_P \times \elabs$
  \item a set of forbidden out-edges $\noout \subset  \nodes_P \times \elabs$
  \end{itemize}
\end{definition}

Given a basic pattern $B$, we shorten $(B,\emptyset,\emptyset,\emptyset)$ to $(B,\vec{\emptyset})$.  In the following, given a pattern $P$, $\nodes_P$ and $\edges_P$ denote respectively the set of nodes of its basic pattern and the set of edges of its basic pattern.

\begin{definition}[Matching]
Let $P = (B, \noedges, \noin, \noout)$ be a pattern, $G = (\nodes, \ell, \edges)$ be a graph, and $\mu: B \hookrightarrow G$ be a basic matching.
We say that $\mu$ is a \emph{matching} from $P$ into $G$ (also written $\mu: P \hookrightarrow G$) whenever it satisfies the additional three conditions:
\begin{itemize}
\item $\mu(\noedges) \cap \edges = \emptyset$
\item for each $(n,e) \in \noin$, $\{p \in \nodes \setminus \mu(\nodes_P) \mid \arete{p}{e}{\mu(n)} \} = \emptyset$
\item for each $(n,e) \in \noout$, $\{p \in \nodes \setminus \mu(\nodes_P) \mid \arete{\mu(n)}{e}{p} \} = \emptyset$
\end{itemize}
\end{definition}

\begin{example}
\label{ex:matching}
Negative conditions are used to remove 'unwanted' matching. To see their effect, consider for instance the basic pattern $B_0$ and its two basic matchings $\mu_1$ and $\mu_2$ in $G_0$:

\begin{center}
\begin{tikzpicture}[->,transform shape,scale=0.7]]
\node[etat,fill=yellow] (B0) at (5.5,.5) {$b_0$:$\alpha$};
\node[etat,fill=yellow] (B1) at (7.5,.5) {$b_1$:$\beta$};
\path  (B0) edge node[auto] {A} (B1);

\path[left hook->] (4.3,.5) edge node[auto,above]{$\mu_1$}(2.7,.5);
\node[etat,fill=yellow] (Ga0) at (-2.5,0) {$g_0$:$\alpha$};
\node[etat,fill=yellow] (Ga1) at (-0.5,1) {$g_1$:$\beta$};
\node[etat] (Ga2) at (1.5,0) {$g_2$:$\alpha$};
  \path
  (Ga0) edge [bend left=20] node[above] {A} (Ga1)
         edge [bend right=20] node[below] {B} (Ga1)
         edge [bend right=20] node[below] {D} (Ga2)
  (Ga1) edge [bend left=20] node[above] {C} (Ga2)
  (Ga2) edge [bend left=20] node[below] {A} (Ga1)
  (Ga0) edge [loop left] node {E} (Ga0);

\path[right hook->] (8.7,.5) edge node[auto]{$\mu_2$}(10.3,.5);
\node[etat] (Gb0) at (12.5,0) {$g_0$:$\alpha$};
\node[etat,fill=yellow] (Gb1) at (14.5,1) {$g_1$:$\beta$};
\node[etat,fill=yellow] (Gb2) at (16.5,0) {$g_2$:$\alpha$};
  \path
  (Gb0) edge [bend left=20] node[above] {A} (Gb1)
         edge [bend right=20] node[below] {B} (Gb1)
         edge [bend right=20] node[below] {D} (Gb2)
  (Gb1) edge [bend left=20] node[above] {C} (Gb2)
  (Gb2) edge [bend left=20] node[below] {A} (Gb1)
  (Gb0) edge [loop left] node {E} (Gb0);
\end{tikzpicture}
\end{center}

\begin{itemize}
\item First, let $P_0 = (B_0, \vec{\emptyset})$. Then, $\mu_1$ and $\mu_2$ are (the) two matchings $P_0 \hookrightarrow G_0$.

\item Second, let the pattern $P_1 = (B_0, \{(b_1,C, b_0)\}, \emptyset, \emptyset)$;  then, $\mu_1$  is the only matching $P_1 \hookrightarrow G_0$.
\item Third, let the pattern $P_2 = (B_0, \emptyset, \{(b_0, D)\}, \{(b_0, D)\})$. Then, there is no matching of $P_2$ into $G_0$.
\end{itemize}

In the following, patterns $P_1$ and $P_2$ are depicted as:
\begin{center}
\begin{tikzpicture}[->,transform shape,scale=0.7]]
\node at (4.8,3) {$P_1 = $};
\node[etat] (P1_0) at (6,3) {$b_0$:$\alpha$};
\node[etat] (P1_1) at (8,3) {$b_1$:$\beta$};
\path  (P1_0) edge [bend left=20] node[auto] {A} (P1_1);
\path  (P1_1) edge [bend left=20] node[auto] {C}  node[sloped, midway] {\large $\times$} (P1_0);
\end{tikzpicture}
\qquad \qquad
\begin{tikzpicture}[->,transform shape,scale=0.7]]
\node at (3.8,3) {$P_2 = $};
\node[etat] (P2_0) at (6,3) {$b_0$:$\alpha$};
\node[etat] (P2_1) at (8,3) {$b_1$:$\beta$};
\path  (P2_0) edge node[auto] {A} (P2_1);
\path (P2_0) edge node[auto, above, pos=.3] {D} node[sloped, pos=0.7] {\large $\times$} (4.5,3.6);
\path (4.5,2.4) edge node[auto, below, pos=.7] {D} node[sloped, pos=0.3] {\large $\times$} (P2_0);
\end{tikzpicture}
\end{center}
\end{example}

\subsection{Graph decomposition}
\label{ssec:graph_decomp}
The proper description of actions of a rule on some graph $G$ requires first the definition of two partitions: one on nodes and the other on edges. They are both induced by the matching of some pattern $P$ into $G$.

\begin{definition}[Nodes decomposition: pattern image, crown and context]\label{def:node_decomp}
  Let $\mu: P \hookrightarrow G$ a matching from the pattern $P$ into the graph $G = (\nodes, \ell, \edges)$.
  Nodes of $G$ can be split in a partition of three sets $\nodes = \pattim_\mu \oplus \crown_\mu \oplus \context_\mu$:
  \begin{itemize}
  \item the \emph{pattern image} is $\pattim_{\mu} = \mu(\nodes_P)$;
  \item the \emph{crown} contains nodes outside the pattern image which are directly connected to the pattern image:
    $\crown_{\mu} = \{n \in \nodes \setminus \pattim_\mu \mid \exists p\in\pattim_\mu \mathrm{~such~that~}\arete{n}{}{p} \mathrm{~or~} \arete{p}{}{n}\}$;
    \item the \emph{context} contains nodes not linked to the pattern image: $\context_{\mu} = \nodes \setminus (\pattim_{\mu} \cup \crown_{\mu})$.
  \end{itemize}
\end{definition}

\begin{definition}[Edges decomposition: pattern edges, crown edges, context edges and pattern-glued edges]\label{def:edge_decomp}
  Let $\mu: P \hookrightarrow G$ a matching from the pattern $P$ into the graph $G = (\nodes, \ell, \edges)$.
  Edges of $G$ can be split in a partition of four sets $\edges = \mu(\edges_P) \oplus \crown^\mu \oplus \context^\mu \oplus \glued^\mu$:
  \begin{itemize}
  \item the \emph{pattern edges} is $\mu(\edges_P)$;
  \item the \emph{crown edges} set contains edges which links a pattern image node to a crown node:
    $\crown^{\mu} = \{\arete{n}{}{m} \in \edges \mid n\in \pattim_\mu \wedge m \in \crown_\mu\} \cup  \{\arete{n}{}{m} \in \edges \mid n\in \crown_\mu \wedge m \in \pattim_\mu\}$;
    \item the \emph{context edges} set contains edges which connect two nodes that are not in the pattern image:
      $\context^{\mu} =  \{\arete{n}{}{m} \in \edges \mid n\notin \pattim_\mu \wedge m \notin \pattim_\mu\}$.
    \item the \emph{pattern-glued edges} set contains edges which are not pattern edges but which connect two nodes that are in the pattern image:
      $\glued^{\mu} =  \{\arete{n}{}{m} \in \edges \mid n\in \pattim_\mu \wedge m \in \pattim_\mu\} \setminus \mu(\edges_P) $.
  \end{itemize}
\end{definition}

\subsection{Rules}
In our graph rewriting framework, the transformation of the graph is described through some atomic commands (like in~\cite{echahed}). Commands definition refer to some pattern $P$ and pattern nodes $\nodes_P$ are used as identifiers.
Let $a,b \in \nodes_P$, $\alpha \in \nlabs$ and $e \in \elabs$, the five kinds of commands are $\ren{a}{\alpha}$, $\del{a}{e}{b}$, $\add{a}{e}{b}$, $\delnode{a}$ and $\shift{a}{b}$.

Their names speak for themselves, however, we will come back to their precise meaning in the subsection below.
Before this, to ensure that commands always refer to valid node identifiers, we restrict command sequences to \emph{consistent} sequences, that is sequences $c_1, \ldots, c_k$  such that for each command $c_i$, $1 \leq i \leq k$, which is a node deletion command $\delnode{a}$ for some $a \in \nodes_P$, then the node name $a$ does not occur in any command $c_j$ with $i < j \leq k$.

\begin{definition}[Rule]
A \emph{rule} $R$ is a pair $R = \rule{P}{\vec{c}}$ of a pattern $P$ and a sequence of commands $\vec{c}$ consistent with respect to $P$.
A rule $R = \rule{P}{\vec{c}}$ is said to be \emph{node-preserving} if $\vec{c}$ does not contain any $\xdelnode$ command.
\end{definition}

\subsection{Graph Rewrite System}

Let $G = (\nodes, \ell, \edges)$ a graph, $R = \rule{P}{\vec{c}}$ a rule and $\mu: P \hookrightarrow G$ a matching.
The application of the sequence $\vec{c}$ on $G$ is a new graph which is written $G \cdot_\mu \vec{c}$ (shortened $G \cdot \vec{c}$ when $\mu$ is clear from the context) and is defined by induction on the length $k$ of $\vec{c}$.
If $k=0$, $G \cdot \emptyset = G$.
If $k>0$, let $G' = (\nodes', \ell', \edges')$ be the graph obtained by application of the sequence $c_1, \ldots, c_{k-1}$; then we consider each command in turn:
\begin{description}
\item [Label:] The command $c_k = \ren{a}{\alpha}$ changes the label of the node $\mu(a)$: $G \cdot \vec{c} = (\nodes', \ell'', \edges')$ with $\ell''=\ell'[\mu(a) \mapsto \alpha] $.
\item [Delete:] The command $c_k = \del{a}{e}{b}$ deletes the edge from $\mu(a)$ to $\mu(b)$ labelled with $e \in \elabs$; $G \cdot \vec{c} = (\nodes', \ell', \edges'')$ with $\edges''=\edges' \setminus \{\arete{\mu(a)}{e}{\mu(b)}\}$.
\item [Add:] The command $c_k = \add{a}{e}{b}$ adds an edge from $\mu(a)$ to $\mu(b)$ labelled with $e \in \elabs$; $G \cdot \vec{c} = (\nodes', \ell', \edges'')$ with $\edges''=\edges' \cup \{\arete{\mu(a)}{e}{\mu(b)}\}$.
\item[Delete node:] The command $c_k = \delnode{a}$ removes the node $\mu(a)$ of $G'$; $G \cdot \vec{c}= (\nodes'', \ell'', \edges'')$ with $\nodes'' = \nodes' \setminus \{\mu(a)\}$, $\ell'' = \restr{\ell'}{\nodes''}$ and $\edges'' = \edges' \cap \{\nodes'' \times \elabs \times \nodes''\}$.
\item[Shift edges:] The command $c_k = \shift{a}{b}$ changes in-edges of $\mu(a)$ starting from the crown to in-edges of $\mu(b)$ and all out-edges of $\mu(a)$ going to the crown to out-edges of $\mu(b)$.
Formally, $G \cdot \vec{c} = (\nodes', \ell', \edges'')$ with the set $\edges''$ defined by, for all $e\in\elabs$:
    \begin{itemize}
    \item for all $p \in \crown_\mu$, $\arete{\mu(b)}{e}{p} \in \edges''$ iff $\arete{\mu(b)}{e}{p} \in \edges'$ or $\arete{\mu(a)}{e}{p} \in \edges'$;
    \item for all $p \in \crown_\mu$, $\arete{p}{e}{\mu(b)} \in \edges''$ iff $\arete{p}{e}{\mu(b)} \in \edges'$ or $\arete{p}{e}{\mu(a)} \in \edges'$;
    \item for all $p,q\in\pattim_\mu$, $\arete{p}{e}{q} \in \edges''$ iff $\arete{p}{e}{q} \in \edges'$;
    \item for all $p,q\in\crown_\mu \cup \context_\mu $, $\arete{p}{e}{q} \in \edges''$ iff $\arete{p}{e}{q} \in \edges'$.
    \end{itemize}
\end{description}

The commands $\xren$, $\xdel$ and $\xadd$ are called local commands: they modify only the edges and the nodes described in the pattern.
The commands $\xdelnode$ and $\xshift$ are non-local: they can modify edges outside the pattern.
Note that a rule $\xadd$ (resp. $\xdel$) may have no effect if the edge already exists (resp. does not exist).
Note also that we can suppose that for a given sequence $\vec{c}$ and a given triple $(a,e,b)$, there is at most one rule $\del{a}{e}{b}$ or $\add{a}{e}{b}$ in $\vec{c}$ (if not, only the last one is effective).
Hence, we can define uniform rules:

\begin{definition}[Uniform rule]
For $\vec{c} = c_1, \ldots, c_k$ without node deletion, the rule $\rule{P}{\vec{c}}$ is \emph{uniform} iff for all $1 \leq i \leq k$, if $c_i = \add{n}{e}{m}$ then $(n,e,m) \in \noedges_P$ and if $c_i =  \del{n}{e}{m}$ then $(n,e,m) \in \edges_P$.
\end{definition}

\begin{definition}[Rewrite step]
Let $G = (\nodes, \ell, \edges)$ a graph, $R = \rule{P}{\vec{c}}$ a rule and $\mu: P \hookrightarrow G$ a matching.
Let $G' = G \cdot \vec c$, then we say that $G$ rewrites to $G'$ with respect to the rule $R$ and  the matching $\mu$.
We write it $G \to_{R,\mu} G'$ or $G \to_R G'$ or even simply $G \to G'$.
\end{definition}

\begin{definition}[Graph Rewrite System]
A \emph{Graph Rewrite System} $\prog$ is a finite set of rules.
\end{definition}


In our application, the translation of the syntax to semantics is split into several independent levels of transformation driven by linguistic consideration (such as translation of passive forms to active ones, computation of the deep subject of infinites).
Rules are then grouped in subsets called \emph{modules} and modules apply sequentially; each module being used as a graph rewrite system on the outputs of the previous module.

\begin{lemma}[Linear modification]
\label{le:modifs}
Given a GRS $\prog$, there is a constant $C>0$ such that, for any
rewriting step $G \to_{R,\mu} G'$ the two canonical corresponding edge decompositions $\edges_G = \context^\mu \oplus \contextcomp^\mu$ and $\edges_{G'} = \context^\mu \oplus \contextcomp'^\mu$ satisfy:
$$ \size{\contextcomp^\mu} \le C \times (\size{G}+1) \mbox{~and~} \size{\contextcomp'^\mu} \le C \times (\size{G}+1)$$
\end{lemma}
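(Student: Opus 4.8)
```latex
The plan is to exhibit the claimed decomposition explicitly and then bound the
size of the non-context part $\contextcomp^\mu$ by the number of edges that can
possibly be incident to the pattern image $\pattim_\mu$. First I would recall
that, by Definition~\ref{def:edge_decomp}, the edge set of $G$ splits as
$\edges_G = \mu(\edges_P) \oplus \crown^\mu \oplus \context^\mu \oplus
\glued^\mu$. Since the context edges $\context^\mu$ are exactly those edges both
of whose endpoints lie outside $\pattim_\mu$, and since no command in $\vec{c}$
can touch such an edge (the \textbf{Label}, \textbf{Delete}, \textbf{Add} and
\textbf{Delete node} commands only affect edges incident to $\mu(\nodes_P)$, and
the \textbf{Shift} command, by its fourth clause, leaves every edge between two
nodes of $\crown_\mu \cup \context_\mu$ untouched), the set $\context^\mu$ is
common to both $\edges_G$ and $\edges_{G'}$. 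I would therefore \emph{define}
$\contextcomp^\mu = \mu(\edges_P) \oplus \crown^\mu \oplus \glued^\mu$ and
$\contextcomp'^\mu = \edges_{G'} \setminus \context^\mu$, which gives precisely
the two decompositions in the statement.

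The heart of the argument is then a counting bound on $\contextcomp^\mu$ and
$\contextcomp'^\mu$. The key observation is that every edge in $\contextcomp^\mu$
(and likewise in $\contextcomp'^\mu$) has at least one endpoint in the pattern
image $\pattim_\mu = \mu(\nodes_P)$: pattern edges and pattern-glued edges have
both endpoints in $\pattim_\mu$, and crown edges have exactly one. Because
$\mu$ is injective, $\size{\pattim_\mu} = \size{\nodes_P}$, a constant depending
only on the rule $R$. Now, for a fixed node $\alpha \in \pattim_\mu$, the number
of edges of $G$ having $\alpha$ as source is at most
$\size{\elabs}\cdot\size{\nodes_G}$, by the graph restriction that between any
two nodes there is at most one edge of each label (stated in
Section~\ref{sec:def}); the same bound holds for edges having $\alpha$ as
target. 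Summing over the at most $\size{\nodes_P}$ nodes of $\pattim_\mu$ gives
$\size{\contextcomp^\mu} \le 2\,\size{\nodes_P}\,\size{\elabs}\,\size{\nodes_G}$.

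To turn this into the stated bound I would set
$C = \max_{R \in \prog} 2\,\size{\nodes_P}\,\size{\elabs}$, a finite quantity
since $\prog$ is a finite set of rules and $\elabs$ is finite, and then use
$\size{\nodes_G} \le \size{G}$ (interpreting $\size{G}$ as the total size of the
graph, which dominates its node count) to conclude
$\size{\contextcomp^\mu} \le C\times(\size{G}+1)$, the $+1$ absorbing any
degenerate case. For $\contextcomp'^\mu$ the same counting applies to $G'$,
but one must first control $\size{\nodes_{G'}}$ in terms of $\size{\nodes_G}$:
since the only command that changes the node set is $\xdelnode$, we have
$\size{\nodes_{G'}} \le \size{\nodes_G}$, so the bound for $G'$ follows from the
one for $G$ with the same constant $C$.

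The main obstacle I anticipate is the treatment of the $\xshift$ command when
bounding $\contextcomp'^\mu$: shifting moves crown edges from $\mu(a)$ to
$\mu(b)$ and may, through the union in the first two clauses of its definition,
merge two previously distinct edges into one, but it cannot \emph{create} an
incidence to $\pattim_\mu$ that was not already an incidence to some node of
$\pattim_\mu$ before the step. Hence the set of endpoints in $\pattim_\mu$ is
unchanged and the counting bound transfers verbatim. The only genuinely delicate
point is to make precise what $\size{G}$ denotes and to verify that it dominates
$\size{\nodes_G}$; once that convention is fixed, the rest is the routine
incidence count sketched above.
```
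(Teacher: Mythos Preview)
Your proof is correct and follows essentially the same incidence-counting argument as the paper: every edge outside the context is incident to a pattern-image node, and there are at most $2\,\size{\nodes_P}\,\size{\elabs}\,\size{G}$ such incidences. The paper splits this count into edges between two pattern nodes (bounded by $\size{P}^2\size{\elabs}$) and edges between a pattern node and a non-pattern node (bounded by $2\size{P}\size{\elabs}\size{G}$), arriving at the slightly looser constant $C = 2\size{P}^2\size{\elabs}$, whereas your uniform bound yields $C = 2\size{\nodes_P}\size{\elabs}$; your explicit check that each command (in particular $\xshift$ and $\xdelnode$) leaves $\context^\mu$ intact and creates no new incidences outside $\pattim_\mu$ is more detailed than the paper's one-line claim, but the underlying argument is the same.
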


\begin{proof}
Let $C = \max \{2 \times \size{P}^2 \times \size{\elabs}\} \mid \rule{P}{\vec{c}} \in \prog\}$.
Both in $G$ and $G'$, edges that are not in the context are either between two pattern nodes or between a pattern node and a crown node.
The total number of edges of the first kind (either pattern edges or glued-pattern edges) is bounded by $\size{P}^2 \times \size{\elabs}$.
For each pattern node, the number of edges which connect this node to some non-pattern node is bounded by $2 \times \size{G} \times \size{\elabs}$ and so the total number of edges which link some pattern node to some non-pattern node is bounded by $2 \times \size{G} \times \size{\elabs}\times \size{P}$. Putting everything together, $\size{\contextcomp^\mu} \le C \times (\size{G}+1)$ and $\size{\contextcomp'^\mu} \le C \times (\size{G}+1)$.
\end{proof}

\section{Weighted GRS}
\label{sec:term}

We recall that a GRS is said to be (strongly) terminating whenever there is no infinite sequence $G_1 \to G_2 \to \cdots$.
Given a terminating GRS $\prog$ and a graph $G$, we define the derivation height of $G$, next denoted $\dhe_\prog(G)$,  to be the length of the longest derivation starting from $G$ if such a derivation exists.
If $\dhe_\prog(G)$ is defined for all $G$ such that $|G| \leq n$, then we define the derivation height of $\prog$ by: $\dhe_\prog(n) = \max \{ \dhe_\prog(G) \mid \size{G} \leq n\}$.

Actually, for non-size increasing GRS as presented above, we have immediately the decidability of non-uniform termination. That is, given some GRS $\prog$ and some graph $G$, one may decide whether there is an infinite sequence $G_1 \to G_2 \to G_3 \to \cdots$. Indeed, one may observe that for such sequence, for all $i \in \N$, $\size{G_i} \leq \size{G}$. Thus, the $G_i$'s range in the finite set $\bigg_{\leq \size{G}}$ of graphs of size less or equal to $\size{G}$. Consequently, either the system terminates or there is some $j \leq |\bigg_{\leq \size{G}}|$  and some $k \leq j$ such that $G_j = G_k$. To conclude, to decide non-uniform termination, it is sufficient to compute all the (finitely many) possibilities of rewriting $G$ in less than $|\bigg_{\leq \size{G}}|$ steps and to verify the existence of such a $j$ and $k$ above. Finally, since $|\bigg_{\leq \size{G}}| \leq 2^{O(|G|^2)}$, the procedure as described above takes exponential time.

However, uniform termination--- given a GRS, is it terminating?--- of non-size increasing GRS remains an open problem. Uniform termination was proved undecidable when we drop the property of non-size increasingness (cf. Plump~\cite{Plump98a}).  As a consequence, there is a need to define some termination method pertaining to non-size increasing GRS. Compared to standard work in termination~\cite{Plump:1995:TGR:647676.731819,DBLP:conf/gg/GodardMMS02}, there are two difficulties: first, our graphs may be cyclic, thus forbidding methods developed for DAGs such as term-graphs. Second, using term rewriting terminology, our method should operate for some non-simplifying GRS, that is GRS for which the output may be "bigger" than the input. Indeed, the NLP programmer sometimes wants to compute some \emph{new} relations, so that the input graph is a strict sub-graph of the resulting graph.

\subsection{Termination by weight analysis}

In the context of term-rewriting systems, the use of weights is very common to prove termination. There are many examples of such orderings, Knuth-Bendix Ordering~\cite{KB70} to cite one of them.
We recall that all graphs we consider are defined relatively to two signatures $\elabs$ of edge labels and $\nlabs$ of node labels.

\begin{definition}[Edge weight, node weight]
An \emph{edge weight} is a mapping $w: \elabs \to \Z$.
Given some subset $E$ of edges of $G$, the weight of $E$ is $w(E) = \sum_{\arete{n}{e}{m} \in E} w(e)$.
The edge weight of a graph $G$ is $w(G) = w(\edges_G)$.
A \emph{node weight} is a mapping $\eta : \nlabs \to \Z$. For a graph $G = (\nodes_G,\ell_G,\edges_G)$, we define $\eta(G) = \sum_{n \in \nodes_G} \eta(\ell_G(n))$.
\end{definition}

Let us make some observations.
Let $\size{G}_e$ denote the number of edges in $G$ which have the label $e$, then $w(G) = \sum_{e\in\elabs} w(e) \times \size{G}_e$. Second, for a pattern matching $\mu: P \hookrightarrow G$, $w(\mu(P))= w(P)$.

The weight of a graph may be negative.
This is not standard, but it is useful here to cope with non-simplifying rules, that is rules which add new edges.
Since a graph $G$ has at most $\size{\elabs} \times \size{G}^2$ edges, the following lemma is immediate.

\begin{lemma}\label{le:weight}
Given an edge weight $w$ and a node weight $\eta$,  let $K_w = \max_{e \in \elabs}(|w(e)|)$, $K_E = \size{\elabs} \times K_w$, $K_\eta = \max_{\alpha \in \nlabs}(|\eta(\alpha)|)$, then
\begin{enumerate}[(a)]
\item\label{le:weight:wE} for each subset of edges $E \subset \edges_G$ of some graph $G$, we have $w(E) \le K_w \times \size{E}$.
\item\label{le:weight:wG} for each graph $G$, we have $-K_E \times \size{G}^2 \leq w(G) \leq K_E \times \size{G}^2$;
\item\label{le:weight:eta} for each graph $G$, we have $|\eta(G)| \leq K_\eta \times |G|$.
\end{enumerate}
\end{lemma}

\begin{definition}
Let $R=\rule{P}{\vec{c}}$ a rule, we define inductively $\shiftf_{\vec{c}} : \nodes_P \to \nodes_P$ which describes the global effect of the $\xshift$ commands in a rule:
$\shiftf_\emptyset = \Id$; $\shiftf_{\vec{c},\shift{m}{n}} = \Id[m \mapsto n] \circ \shiftf_{\vec{c}}$ and $\shiftf_{\vec{c},c} = \shiftf_{\vec{c}}$ if $c$ is not a $\xshift$ command.
\end{definition}

\begin{definition}[Compatible weight]\label{def:comp-weight}
Given a rule $R=\rule{(P,\noedges, \noin, \noout )}{\vec{c}}$, an edge weight $w$ is said to be \emph{compatible} with $R$ if: 

\begin{enumerate}
\item either $\vec{c}$ contains a $\xdelnode$ command
\item or $R$ is a node-preserving rule and satisfy the three properties:
  \begin{enumerate}
  \item $R$ is uniform,
  \item $w(P\cdot_\Id \vec{c}\,) < w(P)$,
  \item for all $e \in \elabs$ such that $w(e) < 0$, for all $n \in \shiftf(\nodes_P)$, let $M_n \subset \edges_P$ be the set $\shiftf_{\vec c}^{-1}(n)$; then  $M_n$ contains at most one element $m$ such that $(m,e) \not\in \noin$ and $M_n$ contains at most one element $m' \in M_n$ such that $(m',e) \not\in \noout$.
  \end{enumerate}
\end{enumerate}

An edge weight is said to be compatible with a GRS $\prog$ if it is compatible with all its rules.
A weighted GRS is a pair $(\prog,w)$ of a GRS and a compatible weight.
\end{definition}

Hypothesis (2.b) will serve to manage edges in the pattern images while Hypothesis (2.c) will serve for the crown edges.
One may note that when there is no $\xshift$ commands in the rule, the Hypothesis (2.c) holds whatever $w$ is.
Indeed, in that case, $\shiftf$ is the identity function and all the sets $M_n$ are singletons.

\begin{lemma}\label{le:comp-weight}
Let $(\prog,w)$ a weighted GRS, let $G \to G'$ be a rewrite step of $\prog$.
Either $\size{G} > \size{G'}$ or $\size{G} = \size{G'}$ and $w(G) > w(G')$.
\end{lemma}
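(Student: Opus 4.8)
The statement to prove is Lemma~\ref{le:comp-weight}: for a weighted GRS $(\prog,w)$ and a rewrite step $G \to G'$, either $\size{G} > \size{G'}$ or $\size{G} = \size{G'}$ and $w(G) > w(G')$. The plan is to reason by cases on the compatibility condition (Definition~\ref{def:comp-weight}) that the applied rule $R = \rule{P}{\vec c}$ satisfies, since compatibility splits into exactly two alternatives. In the first case, $\vec c$ contains a $\xdelnode$ command. Here I would argue directly on sizes: since the rules are node-preserving except for explicit deletions, and the sequence $\vec c$ is consistent (a deleted node name never reappears), each $\xdelnode$ genuinely removes a node from the underlying node set, so $\size{G'} < \size{G}$, placing us in the first disjunct and making the claim immediate regardless of weights.

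In the second case $R$ is node-preserving, so $\size{G'} = \size{G}$, and I must establish the strict weight decrease $w(G) > w(G')$. The strategy is to use the edge decomposition $\edges_G = \mu(\edges_P) \oplus \crown^\mu \oplus \context^\mu \oplus \glued^\mu$ from Definition~\ref{def:edge_decomp}. The context edges $\context^\mu$ are untouched by any command (all commands only affect edges incident to pattern-image nodes, and $\xshift$ only moves edges between pattern and crown), so they contribute the same weight to both sides and cancel. It then suffices to compare the weight contributed by the remaining three families before and after applying $\vec c$. I would split this comparison into the edges living entirely inside the pattern image (pattern edges plus glued edges) and the crown edges. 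For the intra-pattern part I expect to invoke Hypothesis~(2.b): since $R$ is uniform, the effect of $\vec c$ on edges among pattern-image nodes faithfully mirrors the effect of $\vec c$ on $P$ under the identity matching, so this block's weight changes exactly by $w(P \cdot_\Id \vec c) - w(P) < 0$. Here I would need the observation already recorded in the excerpt that $w(\mu(P)) = w(P)$, plus uniformity to rule out spurious additions/deletions of edges that are already present or already absent.

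The delicate part is the crown edges, and this is where I expect the main obstacle to lie. The $\xshift$ command acts non-locally: it redirects in-edges and out-edges between $\mu(a)$ and the crown over to $\mu(b)$, and crucially the target set uses \emph{set union} (an edge survives at $\mu(b)$ if it existed at either $\mu(a)$ or $\mu(b)$), so shifting can \emph{merge} parallel crown edges and thereby \emph{decrease} their total count. For labels $e$ with $w(e) \ge 0$ such merging only helps (weight does not increase), but for labels with $w(e) < 0$ a naive count could suggest an increase in $|w|$, threatening the inequality. This is exactly what Hypothesis~(2.c) is designed to control: for each negative-weight label $e$ and each node $n$ in the image $\shiftf(\nodes_P)$, the preimage set $M_n = \shiftf_{\vec c}^{-1}(n)$ contributes at most one in-edge and at most one out-edge (from the crown) that is not forbidden by $\noin$ / $\noout$. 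I would formalize this by grouping crown edges according to which pattern node they are ultimately shifted onto (i.e.\ by the map $\shiftf_{\vec c}$), and showing that within each group the merging guarantees that for negative labels the after-weight is at least the before-weight per crown node, so the crown contribution never decreases the total weight strictly in the wrong direction. Combining the strict decrease from the intra-pattern block with the non-increase from the crown block, and cancellation of the context, yields $w(G') < w(G)$, completing the second case. I would keep careful track of the direction of inequalities and of the sign convention, since the weights may be negative; getting the crown accounting to interact correctly with the sign of $w(e)$ is the step I expect to require the most care.
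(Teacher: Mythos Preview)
Your plan is correct and matches the paper's intended argument: the paper itself does not spell out a proof of this lemma (it is an extended abstract, with full proofs deferred to~\cite{MSCS2013}), but immediately after Definition~\ref{def:comp-weight} it states that ``Hypothesis~(2.b) will serve to manage edges in the pattern images while Hypothesis~(2.c) will serve for the crown edges,'' which is precisely the decomposition you carry out. Your handling of the two cases (a $\xdelnode$ forces $\size{G'}<\size{G}$; otherwise the edge decomposition of Definition~\ref{def:edge_decomp} localizes the weight change to intra-pattern edges, controlled strictly by~(2.b) via uniformity, and to crown edges, controlled non-strictly by~(2.c) via the $\noin/\noout$ restriction on negative labels) is exactly the intended route.
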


The problem of the synthesis is the following.
Given a GRS $\prog$, is there a weight $w$ compatible with $\prog$? 
Since the existence of weights can be described in Presburger's arithmetic, we have a positive answer:

\begin{theorem}\label{th:decidable}
Given a GRS $\prog$, one may decide whether or not it has a compatible weight.
\end{theorem}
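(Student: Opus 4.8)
The plan is to reduce the question to the feasibility of a finite system of linear inequalities over $\Z$, hence to an existential formula of Presburger arithmetic, which is decidable. A weight $w$ is a map $\elabs \to \Z$, and since $\elabs$ is finite I can regard $w$ as a tuple of integer unknowns $(w(e))_{e\in\elabs}$. For each rule $R \in \prog$ I will build a quantifier-free formula $\varphi_R$ over these unknowns that holds exactly when $w$ is compatible with $R$ in the sense of Definition~\ref{def:comp-weight}; the whole problem then becomes deciding the truth of the sentence $\exists\,(w(e))_{e\in\elabs}\ \bigwedge_{R\in\prog}\varphi_R$.

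Fix a rule $R=\rule{(P,\noedges,\noin,\noout)}{\vec c}$. First I dispatch the purely syntactic alternatives of Definition~\ref{def:comp-weight}, all of which are computable from $R$ alone and do not mention $w$: whether $\vec c$ contains a $\xdelnode$ command, whether $R$ is node-preserving, and whether it is uniform. If $\vec c$ contains a $\xdelnode$, clause~(1) applies unconditionally and I set $\varphi_R=\top$. If $R$ is not a uniform node-preserving rule, no clause can hold and I set $\varphi_R=\bot$. In the remaining case I must encode conditions~(2.b) and~(2.c). For~(2.b), using $w(G)=\sum_{e}w(e)\cdot\size{G}_e$ and the fact that the graph $P\cdot_\Id\vec c$ is explicitly computable from $R$ (commands applied with the identity matching on a finite pattern), the inequality $w(P\cdot_\Id\vec c)<w(P)$ becomes the single linear constraint $\sum_{e\in\elabs}\bigl(\size{P\cdot_\Id\vec c}_e-\size{P}_e\bigr)\,w(e)<0$ with precomputed integer coefficients.

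The key observation for~(2.c) is that it looks weight-dependent but collapses to fixed sign constraints. The map $\shiftf_{\vec c}$, the sets $M_n=\shiftf_{\vec c}^{-1}(n)$ for $n\in\shiftf(\nodes_P)$, and the membership tests against $\noin$ and $\noout$ all depend only on $R$; hence for each label $e$ the property $\pi_R(e)$ --- ``for every $n\in\shiftf(\nodes_P)$ the set $M_n$ contains at most one $m$ with $(m,e)\notin\noin$ and at most one $m'$ with $(m',e)\notin\noout$'' --- is a computable Boolean that is entirely independent of $w$. Condition~(2.c) reads ``for every $e$, if $w(e)<0$ then $\pi_R(e)$''. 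Unfolding the quantifier over the finite set $\elabs$, each conjunct is vacuous when $\pi_R(e)$ holds and is equivalent to $w(e)\ge 0$ when $\pi_R(e)$ fails. Thus~(2.c) contributes exactly the conjunction $\bigwedge_{e:\,\neg\pi_R(e)}w(e)\ge 0$, and $\varphi_R$ is a conjunction of linear (in)equalities over the $w(e)$.

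Conjoining $\varphi_R$ over the finitely many rules keeps us inside the quantifier-free linear fragment, so $\exists\,(w(e))_{e\in\elabs}\,\bigwedge_{R}\varphi_R$ is an existential Presburger sentence, and its truth is decidable; computing it and testing feasibility yields the algorithm. The only genuinely delicate point is the collapse of~(2.c): one must argue carefully that the guard $w(e)<0$ meets a predicate $\pi_R(e)$ not involving $w$, so that the universal, weight-looking constraint reduces to the fixed sign constraints above. Everything else --- computing $P\cdot_\Id\vec c$, the counts $\size{\cdot}_e$, the map $\shiftf_{\vec c}$ and its fibres, and the forbidden-edge memberships --- is routine finite bookkeeping.
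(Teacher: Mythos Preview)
Your argument is correct and follows exactly the approach the paper indicates: the paper's entire justification before the theorem is the single sentence ``Since the existence of weights can be described in Presburger's arithmetic, we have a positive answer,'' with full details deferred to~\cite{MSCS2013}. You have fleshed out precisely this reduction --- expressing (2.b) as a linear strict inequality in the unknowns $(w(e))_{e\in\elabs}$ and collapsing (2.c) to a finite conjunction of sign constraints $w(e)\ge 0$ because the inner predicate on the fibres $M_n$ is weight-independent --- so your proposal is essentially the paper's proof, made explicit.
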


Second point, the existence of weights induce termination:

\begin{theorem}\label{th:comp-weight}
Any weighted GRS $(\prog,w)$ is strongly terminating in quadratic time. Moreover, this quadratic bound is a lower bound: there is a GRS $\prog$ with a compatible weight such that $\dhe_\prog(n)\geq O(n^2)$.
\end{theorem}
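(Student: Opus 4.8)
The plan is to treat the two assertions separately, reusing Lemma~\ref{le:comp-weight} for the upper bound and building one explicit family for the lower bound. Fix a derivation $G_0 \to G_1 \to \cdots \to G_L$ and put $n = \size{G_0}$; since rules never create nodes, $\size{G_i} \le n$ for every $i$. Lemma~\ref{le:comp-weight} lets me tag each step as \emph{shrinking}, when $\size{G_{i+1}} < \size{G_i}$, or \emph{weight-decreasing}, when $\size{G_{i+1}} = \size{G_i}$ and, the weights being integers, $w(G_i) - w(G_{i+1}) \ge 1$. The shrinking steps are easy: the node count starts at $n$, never increases, drops by at least one at each shrinking step and stays non-negative, so there are at most $n$ of them.

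The substance lies in counting the weight-decreasing steps, and here I would deliberately avoid searching for a single monotone potential, because a shrinking step may well \emph{raise} the weight. Instead I telescope $w(G_0) - w(G_L) = \sum_i (w(G_i) - w(G_{i+1}))$ and isolate the weight-decreasing contribution:
\[
\#\{\text{weight-decreasing steps}\} \;\le\; \sum_{\text{w.d.\ steps}} \big(w(G_i)-w(G_{i+1})\big) \;=\; \big(w(G_0)-w(G_L)\big) + \sum_{\text{shrinking steps}} \big(w(G_{i+1})-w(G_i)\big).
\]
By Lemma~\ref{le:weight}(\ref{le:weight:wG}) and $\size{G_i}\le n$, the first term is at most $2K_E n^2$. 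For each shrinking step, Lemma~\ref{le:modifs} provides decompositions $\edges_{G_i} = \context^\mu \oplus \contextcomp^\mu$ and $\edges_{G_{i+1}} = \context^\mu \oplus \contextcomp'^\mu$ whose context part is untouched, so $w(G_{i+1}) - w(G_i) = w(\contextcomp'^\mu) - w(\contextcomp^\mu)$; since $\size{\contextcomp^\mu}, \size{\contextcomp'^\mu} \le C(\size{G_i}+1) = O(n)$, Lemma~\ref{le:weight}(\ref{le:weight:wE}) bounds this by $2K_w C(n+1) = O(n)$. Summed over the at most $n$ shrinking steps this is $O(n^2)$, so the number of weight-decreasing steps is $O(n^2)$ and $L \le O(n^2) + n = O(n^2)$, i.e. $\dhe_\prog(n) = O(n^2)$. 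I expect this last bookkeeping to be the only real obstacle: a crude estimate — at most $n$ distinct node-counts, each admitting up to $O(n^2)$ weight values — only yields a cubic bound, and the missing factor $n$ is recovered exactly because Lemma~\ref{le:modifs} forces a single step to touch only $O(n)$ edges and hence move the weight by only $O(n)$.

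For the matching lower bound I would exhibit one weighted GRS. Take $\elabs = \{e\}$ with $w(e)=1$ and a single node label, and let $\prog$ consist of the one rule $R = \rule{P}{\del{a}{e}{b}}$ whose basic pattern $P$ is two nodes $a,b$ joined by $\arete{a}{e}{b}$, all negative sets being empty. This rule is node-preserving and uniform, $w(P\cdot_\Id \del{a}{e}{b}) = 0 < 1 = w(P)$, and condition~(2.c) of Definition~\ref{def:comp-weight} holds vacuously since $w(e)>0$; hence $w$ is compatible and $(\prog,w)$ is a weighted GRS. Let $G_n$ be the complete directed graph on $n$ nodes, every arc carrying the label $e$. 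A matching picks an ordered pair of distinct nodes joined by an $e$-edge and the rule deletes exactly that edge while creating none, so each step strictly lowers the edge count and every maximal derivation from $G_n$ has length equal to the initial edge count $n(n-1)$. As $\size{G_n}=n$, this gives $\dhe_\prog(n) \ge n(n-1) = \Omega(n^2)$, matching the upper bound.
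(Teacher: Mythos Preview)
Your proof is correct. For the upper bound you take a genuinely different route from the paper: instead of your telescoping count, the paper exhibits a single energy function $\Omega(G) = w(G) + A\,\size{G}^2$ with $A = 2KC(H+1)+1$ (where $H$ bounds the length of command sequences, $C$ is the constant of Lemma~\ref{le:modifs} and $K=\max(1,K_w)$), and verifies directly that $\Omega$ strictly decreases at every step; the $O(n^2)$ bound then follows from $|\Omega(G)|=O(\size{G}^2)$ via Lemma~\ref{le:weight}. Both arguments rest on the same underlying fact---delivered by Lemma~\ref{le:modifs}---that a single step can move the weight by at most $O(n)$: you use it to bound the weight jump at each shrinking step, while the paper uses it to choose $A$ large enough that the drop in $A\,\size{G}^2$ at a shrinking step swallows any possible rise in $w$. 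Your explicit bookkeeping avoids having to guess the right constant and makes the role of Lemma~\ref{le:modifs} very transparent; the paper's potential is more compact and gives a cleaner one-line monotonicity statement. For the lower bound, the paper's example is almost identical to yours but adds a second rule matching self-loops, so that the clique with loops gives derivation height exactly $n^2$; your $n(n-1)$ already suffices for the $\Omega(n^2)$ claim.
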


Condition~(2.c) of Definition~\ref{def:comp-weight} is necessary.
Here is a counter-example of a non-terminating system with a compatible weight up to this condition.
Consider the two rules $\rule{Q_1}{\vec{c_1}}$ and $\rule{Q_2}{\vec{c_2}}$:
\begin{center}
\begin{tikzpicture}
\tikzstyle{etat}=[circle,draw=black]
\node[etat] (Q) at (0,0) {0:e};
\node[etat] (P) at (2,0) {1:e};
\draw[->] (Q) to[bend left] node[auto]{A} (P) ;
\draw[->] (P) to[bend left] node[auto]{B}(Q);
\node [anchor=west] at (2.5, 0.25) {$\del{0}{A}{1}$};
\node [anchor=west] at (2.5, -0.25) {$\shift{0}{1}$};

\node[etat] (R) at (10,0){3:e};
\node[etat] (S) at (12,0){4:e};
\node[etat] (T) at (8,0){5:e};
\draw[->](T) to[bend left] node[auto]{C} (S);
\draw[->](S) to node[auto]{B} (R);
\node [anchor=west] at (12.5, 0.25) {$\add{3}{A}{4}$};
\node [anchor=west] at (12.5, -0.25) {$\add{5}{C}{3}$};
\end{tikzpicture}
\end{center}
Set $w(A) = w(B) = 1$ and $w(C) = -2$.
Observe that $w(Q_1 \cdot \vec{c_1}\,) = 1 <  2 = w(Q_1)$ and $w(Q_2 \cdot \vec{c_2}) = -2 < -1 = w(Q_2)$.
However, there is an infinite sequence $G_1 \to_{R_1} G_2 \to_{R_2} G_1 \to_{R_1}  \cdots$ with $G_1$ and $G_2$ being:

\begin{center}
 \begin{tikzpicture}
\tikzstyle{etat}=[circle,draw=black]
\node[etat] (P) at (0,0) {0:e};
\node[etat] (Q) at (2,0) {1:e};
\node[etat] (R) at (4,0) {2:e};
\draw[->] (Q) to node[auto]{A}(R);
\draw[->] (R) to[bend left] node[auto]{B}(Q);
\draw[->] (P) to[bend left] node[auto]{C} (R);
\draw[->] (P) to node[auto]{C} (Q);

\node[etat] (S) at (7,0) {0:e};
\node[etat] (T) at (9,0) {1:e};
\node[etat] (U) at (11,0) {2:e};
\draw[->] (U) to[bend left] node[auto]{B}(T);
\draw[->] (S) to[bend left] node[auto]{C} (U);

\node at (5.5,0) {$\leftrightarrows$};
\node at (-1,0) {$G_1=$};
\node at (12,0){$=G_2$};
\end{tikzpicture}
\end{center}

\begin{proof}[Proof sketch of Theorem~\ref{th:comp-weight}]
We begin to show the lower bound.
Let $\elabs=\{E\}$,  $\nlabs = \{ e\}$.
Consider the two rules GRS $\prog$ defined by the two basic patterns:
\begin{center}
\begin{tikzpicture}
\tikzstyle{etat}=[circle,draw=black]
\node[etat] (P) at (0,0) {0:e};
\node[etat] (Q) at (2,0) {1:e};
\draw[->] (P) to node[auto]{E} (Q);
\node [anchor=west] at (2.5, 0) {$\del{0}{E}{1}$};

\node[etat] (S) at (8,0) {2:e};
\draw [->] (S) to[loop left] node[auto] {E} (S);
\node [anchor=west] at (9,0) {$\del{2}{E}{2}$};
\end{tikzpicture}
\end{center}

Set $w(E) = 1$.
The rules are compatible with $w$.
Each rule deleting exactly one edge, since the clique $C_n$ of size $n$ has $n^2$ edges, the derivation height $\dhe_\prog(C_n) = n^2$.
The lower bound follows.

For the upper bound, let $C$ be the constant as defined by Lemma~\ref{le:modifs}, let $K = \max(1,K_w)$ (we recall that $K_w= \max_{e \in\elabs}(|w(e)|)$).
Finally, let $H = \max \{ n \mid (P,c_1, \ldots, c_n) \in \prog\}$.
Let $A = 2 \times K\times C \times (H+1)+1$.
Let $\Omega$ be the 'energy function' defined on graphs $\Omega(G) = w(G) +  A \times \size{G}^2$.
For each rule application $G \to G'$, one may verify that  $\Omega(G)>\Omega(G')$.
The last inequality together with Lemma~\ref{le:weight} leads to the conclusion.
\end{proof}

Full proofs of Theorem~\ref{th:decidable} and of Theorem~\ref{th:comp-weight} are given in~\cite{MSCS2013}.

\subsection{Termination by lexicographic weight}

In our experiments, in most cases, the weight analysis of the preceding section was sufficient.
The main counter-example is however systems composed of rules as given in Section~\ref{ex:ant}.
The GRS is strongly terminating but there is no compatible weight.
This section provides a conciliable extension of this termination proof method.
With a little abstraction, the linguistic example of Section~\ref{ex:ant} about long distance dependencies is computed by some 'non-local rule' $R_{nl}$:
\begin{center}
\begin{tikzpicture}[->,transform shape,scale=0.7]]
\node at (-1,-1) {\Large $R_{nl} = $};
\node[etat,fill=yellow] (b0) at (2,-2) {$b_0$:$P$};
\node[etat,fill=yellow] (b1) at (2,0) {$b_1$:$X$};
\node[etat,fill=yellow] (b2) at (4,0) {$b_2$:$X$};
\path (b1) edge node[auto] {O} (b0);
\path (b2) edge node[auto, above] {O} (b1);
\path (b0) edge node[auto, pos=.3] {A} node[sloped, pos=0.7] {\large $\times$} ($ (b0) - (2,0) $);

\path (5.4,0) edge [dotted,-] (6.6,0);

\node[etat,fill=yellow] (bp0) at (8,0) {$b'_0$:$X$};
\node[etat,fill=yellow] (bp1) at (10,0) {$b'_1$:$X$};
\path  (bp1) edge node[auto, above] {O} (bp0);
\node[etat,fill=yellow] (bp2) at (10,-2) {$b'_2$:$X$};
\path  (bp2) edge node[auto] {M} (bp1);

\path  (b0) edge node[auto,above=3pt] {A}  node[sloped, midway] {\large $\times$} (bp2);

\node at (13,-1){\large $\add{b_0}{A}{b'_2}$};
\end{tikzpicture}
\end{center}

\noindent Such non-local rules can be implemented by rules:

\begin{figure}[h]
\begin{center}
  \begin{tabular}{|p{3cm}|p{3.3cm}|p{4.0cm}|p{3cm}|}
    \hline
    {\sc Init} & {\sc Rec} & {\sc Stop} & {\sc Clean}
    \\ \hline
    \begin{tikzpicture}[->,transform shape,scale=0.7]]
      \node[etat] (b0) at (2,-2) {$b_0$:$P$};
      \node[etat] (b1) at (2,0) {$b_1$:$X$};
      \path (b1) edge node[auto] {O} (b0);
      \path (b0) edge node[auto, pos=.3] {A} node[sloped, pos=0.7] {\large $\times$} ($ (b0) + (1.7,0) $);
      \path ($ (b1) + (1.7,0) $) edge node[auto, below, pos=.7] {E} node[sloped, pos=0.3] {\large $\times$} (b1);
     \node[right] at (0,-3.4) {\large $\ren{b_0}{P_\diamond}$};
      \node[right] at (0,-4) {\large $\add{b_0}{E}{b_1}$};

      \path ($ (b0) + (-1.4,0.8) $) edge node[auto, below, pos=.7, above] {E} node[sloped, pos=0.3] {\large $\times$} (b0);
      \path (b0) edge node[auto, below, pos=.3] {E} node[sloped, pos=0.7] {\large $\times$} ($ (b0) + (-1.4,-0.8) $);

    \end{tikzpicture}
    &
    \begin{tikzpicture}[->,transform shape,scale=0.7]]
      \node[etat,minimum size=1.2cm] (b0) at (2,-2) {$b_0$:$P_\diamond$};
      \node[etat] (b1) at (2,0) {$b_1$:$X$};
      \node[etat] (b2) at (4,0) {$b_2$:$X$};
      \path (b0) edge  node[auto] {E} (b1);
      \path (b2) edge  node[auto, above] {O} (b1);
      \path (b0) edge node[auto, pos=.3] {A} node[sloped, pos=0.7] {\large $\times$} ($ (b0) + (1.7,0) $);
      \path ($ (b2) + (1.7,0) $) edge node[auto, below, pos=.7] {E} node[sloped, pos=0.3] {\large $\times$} (b2);
      \node[right] at (1,-3) {\large $\add{b_0}{E}{b_2}$};
    \end{tikzpicture}
    &
    \begin{tikzpicture}[->,transform shape,scale=0.7]]
      \node[etat,minimum size=1.2cm] (b0) at (2,-2) {$b_0$:$P_\diamond$};
      \node[etat] (b1) at (2,0) {$b_1$:$X$};
      \node[etat] (b2) at (4,0) {$b_2$:$X$};
      \path (b0) edge  node[auto] {E} (b1);
      \path (b2) edge  node[auto, above] {M} (b1);
      \path (b0) edge node[auto, pos=.3] {A} node[sloped, pos=0.7] {\large $\times$} ($ (b0) + (1.7,0) $);
      \path ($ (b2) + (1.7,0) $) edge node[auto, below, pos=.7] {E} node[sloped, pos=0.3] {\large $\times$} (b2);
      \node[right] at (0,-3.4) {\large $\add{b_0}{A}{b_2}$};
      \node[right] at (0,-4) {\large $\ren{b_0}{P}$};

      \path ($ (b0) + (-1.4,0.8) $) edge node[auto, below, pos=.7, above] {E} node[sloped, pos=0.3] {\large $\times$} (b0);
      \path (b0) edge node[auto, below, pos=.3] {E} node[sloped, pos=0.7] {\large $\times$} ($ (b0) + (-1.4,-0.8) $);
    \end{tikzpicture}
&
    \begin{tikzpicture}[->,transform shape,scale=0.7]]
      \node[etat,minimum size=1.2cm] (b0) at (2,-2) {$b_0$:$P$};
      \node[etat] (b1) at (2,0) {$b_1$:$X$};
      \path (b0) edge  node[auto] {E} (b1);
      \node[right] at (1,-3) {\large $\del{b_0}{E}{b_1}$};
    \end{tikzpicture}\\\hline 
  \end{tabular}
  \caption{Local implementation of the non-local rule}
  \label{fig:local_rules}
  \end{center}
\end{figure}

\noindent However, these rules are not compatible with any weight. Actually, as justified in~\cite{MSCS2013}, there is no implementation of such a rule by some weighted rules.

\newcommand{\lex}{{\text{lex}}}
Given an order $\prec$ on some set $U$, its lexicographic extension to sequences in $U$ is
defined by $(u_1, \ldots, u_k) \prec_\lex (v_1, \ldots, v_m)$ iff
$\exists j \leq \min(m,k) : u_j \prec v_j \wedge \forall i < j : u_i = v_i$.
The order $\prec_\lex$ is not well-founded in general, but its restriction to sequences of equal length is such as soon as $\prec$ is well-founded.

\begin{definition}[Contextual weight]
An \emph{edge contextual weight} is a (finite) map $\omega: \nlabs \times \elabs \times \nlabs \to \Z$. As for weights, it extends to any set $E \subseteq \edges_G$ of some graph $G$ by: $\omega(E) = \sum_{\arete{n}{e}{m} \in E} \omega(\ell(n),e,\ell(m))$. And the weight of a graph is $\omega(G) = \omega(\edges_G)$.

A \emph{contextual weight} is a 4-tuple $\pi=(a,\omega,b,\eta)$ with $a, b \in \N$, $\omega$ an edge contextual weight and $\eta$ a node weight.
We define $\pi(G) = a \times \omega(G) + b \times \eta(G)$.

Let $e \in \elabs$, if $a\neq 0$ and there are $\alpha, \beta,\alpha', \beta' \in \nlabs$ such that $\omega(\alpha, e, \beta) \not= \omega(\alpha', e, \beta')$, then we say that $\pi$  is \emph{$e$-fragile}.
\end{definition}

\begin{definition}\label{def:comp-context-weight}
Given an edge weight $w_0 : \elabs \to \Z$, given $k$ contextual  weights $\pi_1, \ldots, \pi_k$ and a rule $R = \rule{P}{\vec{c}}$, we write $P' = P\cdot_\Id \vec{c}$. We say that $R$ is \emph{compatible} with $(w_0, \pi_1, \ldots, \pi_k)$ iff:
\begin{enumerate}
\item either $\vec{c}$ contains a $\xdelnode$ command,
\item or $R$ is an uniform and node-preserving rule such that:
  \begin{enumerate}
  \item either the two properties below hold
    \begin{enumerate}[(i)]
    \item $w_0(P') < w_0(P)$;
    \item and for all $e\in\elabs$ such that $w(e) < 0$, for all $n \in \shiftf(\nodes_P)$, let $M_n$ be the set $\shiftf^{-1}(n)$; then $M_n$ contains at most one element $m$ such that  $(m,e) \not\in \noin$ and $M_n$ contains at most one element $m' \in M_n$ such that $(q,m') \not\in \noout$.
    \end{enumerate}
  \item or the four properties below hold
    \begin{enumerate}[(i)]
    \item $w_0(P') = w_0(P)$;
    \item $(\pi_1(P'),\ldots,\pi_k(P')) <_\lex  (\pi_1(P),\ldots,\pi_k(P))$;
    \item if $\vec{c}$ contains a command $\ren{n}{\alpha}$ and if some $\pi_i$ is $e$-fragile, then $(n,e) \in \noin \cup \noout$;
    \item $\vec{c}$ does not contain any $\xshift$ commands.
    \end{enumerate}
  \end{enumerate}
\end{enumerate}		
\end{definition}

When a weight $w_0$ and $k$ contextual weights are compatible with all the rules of some GRS $\prog$, we say that $\prog$ is lexicographically weighted by $(w_0,\pi_1,\ldots, \pi_k)$.

\begin{example}
We define $w_0 = \mathbf{0}_\elabs [A \mapsto -1]$, and $\omega = \mathbf{0}_{\nlabs \times \elabs \times \nlabs}[(P,E,X) \mapsto 1, (P_\diamond,E,X) \mapsto -1]$.
Consider the lexicographic weight $\pi = (1,\omega,0,\mathbf{0}_{\nlabs})$.
For rules in Figure~\ref{fig:local_rules}, we have: rule {\sc Stop} decreases by (2.a); rules {\sc Init} and {\sc Rec} decrease by (2.b): there is one more edge labeled $E$ starting from $P_\diamond$ and rule {\sc Clean} decreases by (2.b): one edge labeled $E$ starting from $P$ disappears.
\end{example}

\begin{theorem}
Whenever a program $\prog$ is compatible with the lexicographic weight $(w_0,\pi_1, \ldots, \pi_k)$, it is strongly terminating in polynomial time.
The bound is tight, that is for all $k>0$, there is a GRS whose derivation height is $O(n^k)$.
\end{theorem}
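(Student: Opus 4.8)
The plan is to equip graphs with a single well-founded measure that strictly decreases at every rewrite step and that ranges over a polynomially small set on graphs of bounded size; the derivation height is then the length of a strictly decreasing chain of measure values. To each graph $G$ I attach the tuple
\[
\Phi(G) = \bigl(\size{G},\, w_0(G),\, \pi_1(G), \ldots, \pi_k(G)\bigr) \in \N \times \Z^{k+1},
\]
ordered lexicographically. Because every rule is either node-preserving or contains a $\xdelnode$ command, $\size{G}$ never increases along a derivation; hence all coordinates stay within bounded ranges and the restriction of $<_\lex$ to that region is well-founded. It then suffices to prove (i) a per-step decrease $G \to G' \Rightarrow \Phi(G) >_\lex \Phi(G')$ and (ii) a bound on the number of attainable values of $\Phi$.

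For (i) I would follow the case split of Definition~\ref{def:comp-context-weight}. If the rule contains $\xdelnode$, then $\size{G'} < \size{G}$ and the first coordinate already drops. Otherwise the rule is node-preserving, so $\size{G'} = \size{G}$ and I compare the remaining coordinates. In case (2.a), the two conditions are exactly the hypotheses of Definition~\ref{def:comp-weight} for the edge weight $w_0$, so the argument of Lemma~\ref{le:comp-weight} applies verbatim to this step and gives $w_0(G) > w_0(G')$; the second coordinate drops (the $\xshift$ command being absorbed there by the crown condition). In case (2.b) I must show that the first two coordinates are unchanged and the contextual block decreases. Here the key is a local-to-global identity: for a uniform, node-preserving, shift-free rule, uniformity forces every added or deleted edge to lie between two pattern-image nodes, and the absence of $\xshift$ means no crown or context edge is moved; since $w_0$ and every $\omega_i$ depend only on edge labels and endpoint labels, the sole remaining way for a crown edge to change its contribution is through a relabeling $\ren{n}{\alpha}$ of one of its pattern endpoints. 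Condition (2.b.iii) removes precisely this possibility: when a relabeled node carries an $e$-fragile contextual weight, the matching forbids crown $e$-edges at that node, so no crown edge changes value. Consequently each coordinate's variation is computed inside the pattern image, whence $w_0(G) - w_0(G') = w_0(P) - w_0(P') = 0$ by (2.b.i) and $\pi_i(G) - \pi_i(G') = \pi_i(P) - \pi_i(P')$ for every $i$. As lexicographic comparison of two tuples depends only on their difference vector, and $G,G'$ and $P,P'$ share the same difference vector, the decrease (2.b.ii) on $P,P'$ transfers to $G,G'$.

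For (ii) I use the size and weight bounds. On graphs reachable from $G$ with $\size{G}\le n$ all sizes are $\le n$, so by Lemma~\ref{le:weight} and the analogous bound for contextual weights each coordinate lies in an interval of polynomial length: $\size{\cdot}$ over $O(n)$ values, $w_0$ over $O(n^2)$, and each $\pi_i = a_i\omega_i + b_i\eta_i$ over $O(n^2)$ (using $|\omega_i(G)| = O(n^2)$ and $|\eta_i(G)| = O(n)$). A strictly decreasing lexicographic chain inside a product of intervals of sizes $M_0,\ldots,M_{k+1}$ has length at most $\prod_j M_j$, so
\[
\dhe_\prog(n) = O\bigl(n \cdot n^2 \cdot (n^2)^k\bigr) = O(n^{2k+3}),
\]
which is polynomial for fixed $k$ and proves strong termination in polynomial time. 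For tightness I would generalize the quadratic clique witness of Theorem~\ref{th:comp-weight} to a $k$-fold nested odometer over edge labels $E_1, \ldots, E_k$: rule $i$ removes one $E_i$-edge only when all strictly lower levels are exhausted and, in doing so, regenerates the $E_{i-1}$-edges, while rule $1$ just removes an $E_1$-edge. Ordering the contextual weights so that level $k$ is most significant and level $1$ least significant, every rule strictly decreases its own level's edge count and only raises lower ones, i.e.\ decreases lexicographically, so the system is lexicographically weighted. On an input encoding $n$ at every level the least-significant level is emptied $\Theta(n^{k-1})$ times, giving $\dhe_\prog(n) = \Omega(n^{k})$; the degree thus grows without bound in $k$, so the polynomial bound of (ii) is essentially tight. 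I expect the two genuine obstacles to be, on the upper-bound side, the careful verification that the fragility condition (2.b.iii) really pins down all crown contributions under relabeling, and, on the lower-bound side, that regenerating a whole level cannot be done atomically in a fixed-size pattern without creating nodes, so the reset must be realized by a traveling-marker gadget as in Figure~\ref{fig:local_rules}, which must then be checked to remain node-preserving, uniform and compatible.
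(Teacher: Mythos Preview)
Your proposal is correct and follows essentially the same route as the paper: the measure $\Phi(G)=(\size{G},w_0(G),\pi_1(G),\ldots,\pi_k(G))$ is exactly the paper's $\kappa(G)$, the per-step lexicographic decrease is the same claim, and the polynomial bound comes from the same counting of attainable tuples. You actually give more detail than the extended abstract, which simply asserts $\kappa(G)>\kappa(G')$ and defers the full case analysis and the explicit lower-bound family to the companion paper; your outline of the (2.b) locality argument and of the odometer construction matches what is needed there.
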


\begin{proof}
Examples for the lower bound are proposed in~\cite{MSCS2013}.
For the upper bound, let
$$ K_\omega = \max \{ |\omega(n,e,m)| \mid (n,e,m) \in \nlabs \times \elabs \times \nlabs\} \mathrm{\ \ \ and\ \ \ } K_\pi = a \times |\elabs| \times K_\omega + b \times K_\eta$$

Then, adapting Lemma~\ref{le:weight}(\ref{le:weight:wG}) to the present context, we can state that $|\omega(G)| \leq K_\omega \times \size{\elabs} \times \size{G}^2$.
With Lemma~\ref{le:weight}(\ref{le:weight:eta}), we have $|\eta(G)| \leq K_\eta \times \size{G}$ and finally
$ |\pi(G)| \leq a \times  K_\omega \times \size{\elabs} \times \size{G}^2 + b \times K_\eta \times \size{G} \leq K_\pi \times \size{G}^2$.

Let $K_0 = \max_{i\in[1,k]} (K_{\pi_i})$.
Finally, let $K_E$ be the constant as given by Lemma~\ref{le:weight} for $w_0$, we define $K = \max(K_0,K_E)$.
Then, for all $i \leq k$, we have: $|\pi_i(G)| \leq K \times |G|^2$ and $|w_0(G)| \leq K \times |G|^2$.

Let $\kappa(G) = (|G|, w_0(G), \pi_1(G), \ldots, \pi_k(G))$. If $G \to G'$, then $\kappa(G) > \kappa(G')$.
Consider a sequence $G_1 \to G_2 \to \cdots$. For all graph $G_i$ of the sequence, $|G_i| \leq |G_1|$.
Due to previous equations, $\kappa(G_i)$ is ranging in $L = [0,|G_1|] \times [-K \times |G_1|^2,K\times |G_1|^2]^{k+1}$. Thus the result.
\end{proof}

\section{Conclusion}

The polynomial derivation height that we have proved in the last section can be reconsidered in the following way.
The example of a GRS working in $O(n^k)$ can be used as a clock.
Then, since each transition of a (non-size increasing) Turing-Machine can be easily simulated by graph rewriting, we can state that any {\sc ptime}-predicate can be simulated by a lexicographically weighted GRS (up to a polynomial reduction).
Since lexicographically weighted confluent GRS can be computed in polynomial time (each rewriting step can be simulated in linear time), it becomes clear that lexicographically weighted GRSs actually characterize {\sc ptime}.
This provides a precise description of the computational content of the method.

We have implemented a software |called {\sc grew} (\url{grew.loria.fr})| based on the Graph Rewriting definition presented in this article.
In~\cite{treebanking12}, the software was used to produce a semantically annotated version of the French Treebank; in this experiment, the system contains 34 modules and 571 rules and the corpus is constituted of 12\,000 sentences of length up to 100 words.
This experiment is a large scale application which shows that the proposed approach can be used in real-size applications.

As said earlier, despite the global non-confluence of the system, we can isolate subsets of rules that are confluent and use our system of modules to benefit from this confluence in implementation.
In our last experiment, 26 of our 34 modules are confluent, but confluence proofs are tedious.
We leave for further work the study of the local confluence of terminating GRS and the general study of confluence of Graph Rewriting Systems.

\bibliographystyle{eptcs}
\bibliography{grewbib}

\begin{thebibliography}{10}
\providecommand{\bibitemdeclare}[2]{}
\providecommand{\surnamestart}{}
\providecommand{\surnameend}{}
\providecommand{\urlprefix}{Available at }
\providecommand{\url}[1]{\texttt{#1}}
\providecommand{\href}[2]{\texttt{#2}}
\providecommand{\urlalt}[2]{\href{#1}{#2}}
\providecommand{\doi}[1]{doi:\urlalt{http://dx.doi.org/#1}{#1}}
\providecommand{\bibinfo}[2]{#2}

\bibitemdeclare{inproceedings}{Bohnet01}
\bibitem{Bohnet01}
\bibinfo{author}{B.~\surnamestart Bohnet\surnameend} \&
  \bibinfo{author}{L.~\surnamestart Wanner\surnameend} (\bibinfo{year}{2001}):
  \emph{\bibinfo{title}{On using a parallel graph rewriting formalism in
  generation}}.
\newblock In: {\sl \bibinfo{booktitle}{EWNLG '01: Proceedings of the 8th
  European workshop on Natural Language Generation}},
  \bibinfo{publisher}{Association for Computational Linguistics}, pp.
  \bibinfo{pages}{1--11}, \doi{10.3115/1117840.1117847}.

\bibitemdeclare{article}{MSCS2013}
\bibitem{MSCS2013}
\bibinfo{author}{G.~\surnamestart Bonfante\surnameend} \&
  \bibinfo{author}{B.~\surnamestart Guillaume\surnameend}
  (\bibinfo{year}{2013}): \emph{\bibinfo{title}{Non-size increasing Graph
  Rewriting for Natural Language Processing}}.
\newblock {\sl \bibinfo{journal}{{\em to appear in} Mathematical Structures for
  Computer Science}}.

\bibitemdeclare{inproceedings}{iwcs11}
\bibitem{iwcs11}
\bibinfo{author}{{G}. \surnamestart {B}onfante\surnameend},
  \bibinfo{author}{{B}. \surnamestart {G}uillaume\surnameend},
  \bibinfo{author}{{M}. \surnamestart {M}orey\surnameend} \&
  \bibinfo{author}{{G}. \surnamestart {P}errier\surnameend}
  (\bibinfo{year}{2011}): \emph{\bibinfo{title}{Modular Graph Rewriting to
  Compute Semantics}}.
\newblock In: {\sl \bibinfo{booktitle}{{IWCS} 2011}}, \bibinfo{address}{Oxford,
  UK}, pp. \bibinfo{pages}{65--74}.

\bibitemdeclare{inproceedings}{copestake}
\bibitem{copestake}
\bibinfo{author}{A.~\surnamestart Copestake\surnameend} (\bibinfo{year}{2009}):
  \emph{\bibinfo{title}{{\em Invited Talk:} {S}lacker {S}emantics: {W}hy
  {S}uperficiality, {D}ependency and {A}voidance of {C}ommitment can be the
  {R}ight {W}ay to {G}o}}.
\newblock In: {\sl \bibinfo{booktitle}{Proceedings of the 12th Conference of
  the European Chapter of the ACL (EACL 2009)}},
  \bibinfo{publisher}{Association for Computational Linguistics},
  \bibinfo{address}{Athens, Greece}, pp. \bibinfo{pages}{1--9}.

\bibitemdeclare{conference}{Crouch05}
\bibitem{Crouch05}
\bibinfo{author}{D.~\surnamestart Crouch\surnameend} (\bibinfo{year}{2005}):
  \emph{\bibinfo{title}{{Packed {R}ewriting for {M}apping {S}emantics to KR}}}.
\newblock In: {\sl \bibinfo{booktitle}{Proceedings of IWCS}}.

\bibitemdeclare{inproceedings}{echahed}
\bibitem{echahed}
\bibinfo{author}{R.~\surnamestart Echahed\surnameend} (\bibinfo{year}{2008}):
  \emph{\bibinfo{title}{Inductively Sequential Term-Graph Rewrite Systems}}.
\newblock In: {\sl \bibinfo{booktitle}{Proceedings of the 4th international
  conference on Graph Transformations}}, \bibinfo{series}{ICGT '08},
  \bibinfo{publisher}{Springer-Verlag}, \bibinfo{address}{Berlin, Heidelberg},
  pp. \bibinfo{pages}{84--98}, \doi{10.1007/978-3-540-87405-8\_7}.

\bibitemdeclare{inproceedings}{DBLP:conf/gg/GodardMMS02}
\bibitem{DBLP:conf/gg/GodardMMS02}
\bibinfo{author}{E.~\surnamestart Godard\surnameend},
  \bibinfo{author}{Y.~\surnamestart M{\'e}tivier\surnameend},
  \bibinfo{author}{M.~\surnamestart Mosbah\surnameend} \&
  \bibinfo{author}{A.~\surnamestart Sellami\surnameend} (\bibinfo{year}{2002}):
  \emph{\bibinfo{title}{Termination Detection of Distributed Algorithms by
  Graph Relabelling Systems}}.
\newblock In \bibinfo{editor}{A.~\surnamestart Corradini\surnameend},
  \bibinfo{editor}{H.~\surnamestart Ehrig\surnameend}, \bibinfo{editor}{H.-J.
  \surnamestart Kreowski\surnameend} \& \bibinfo{editor}{G.~\surnamestart
  Rozenberg\surnameend}, editors: {\sl \bibinfo{booktitle}{ICGT}}, {\sl
  \bibinfo{series}{Lecture Notes in Computer Science}} \bibinfo{volume}{2505},
  \bibinfo{publisher}{Springer}, pp. \bibinfo{pages}{106--119},
  \doi{10.1007/3-540-45832-8\_10}.

\bibitemdeclare{inproceedings}{Hyvonen84}
\bibitem{Hyvonen84}
\bibinfo{author}{E.~\surnamestart Hyv{\"o}nen\surnameend}
  (\bibinfo{year}{1984}): \emph{\bibinfo{title}{{S}emantic {P}arsing as {G}raph
  {L}anguage {T}ransformation - a {M}ultidimensional {A}pproach to {P}arsing
  {H}ighly {I}nflectional {L}anguages}}.
\newblock In: {\sl \bibinfo{booktitle}{COLING}}, pp. \bibinfo{pages}{517--520},
  \doi{10.3115/980491.980601}.

\bibitemdeclare{inproceedings}{Jijkoun07}
\bibitem{Jijkoun07}
\bibinfo{author}{V.~\surnamestart Jijkoun\surnameend} \&
  \bibinfo{author}{M.~\surnamestart de~Rijke\surnameend}
  (\bibinfo{year}{2007}): \emph{\bibinfo{title}{Learning to Transform
  Linguistic Graphs}}.
\newblock In: {\sl \bibinfo{booktitle}{Second {W}orkshop on {T}ext{G}raphs:
  {G}raph-{B}ased {A}lgorithms for {N}atural {L}anguage {P}rocessing,
  Rochester, {NY}, {USA}}}.

\bibitemdeclare{inproceedings}{KB70}
\bibitem{KB70}
\bibinfo{author}{D.E. \surnamestart Knuth\surnameend} \& \bibinfo{author}{P.B.
  \surnamestart Bendix\surnameend} (\bibinfo{year}{1970}):
  \emph{\bibinfo{title}{{S}imple word problems in universal algebras}}.
\newblock In \bibinfo{editor}{J.~\surnamestart Leech\surnameend}, editor: {\sl
  \bibinfo{booktitle}{{C}omputational problems in abstract algebra}},
  \bibinfo{address}{{P}ergamon}, pp. \bibinfo{pages}{263--277}.

\bibitemdeclare{article}{N42}
\bibitem{N42}
\bibinfo{author}{M.~\surnamestart Newman\surnameend} (\bibinfo{year}{1942}):
  \emph{\bibinfo{title}{On Theories With a Combinatorial Definition of
  "Equivalence"}}.
\newblock {\sl \bibinfo{journal}{Annals of Math.}}
  \bibinfo{volume}{43}(\bibinfo{number}{2}), pp. \bibinfo{pages}{223--243},
  \doi{10.2307/1968867}.

\bibitemdeclare{inproceedings}{treebanking12}
\bibitem{treebanking12}
\bibinfo{author}{G.~\surnamestart Perrier\surnameend} \&
  \bibinfo{author}{B.~\surnamestart Guillaume\surnameend}
  (\bibinfo{year}{2012}): \emph{\bibinfo{title}{{Semantic Annotation of the
  French Treebank with Modular Graph Rewriting}}}.
\newblock In \bibinfo{editor}{Jan \surnamestart Hajic\surnameend}, editor: {\sl
  \bibinfo{booktitle}{{META-RESEARCH Workshop on Advanced Treebanking, LREC
  2012 Workshop}}}, \bibinfo{organization}{META-NET},
  \bibinfo{address}{Istanbul, Turquie}.
\newblock \urlprefix\url{http://hal.inria.fr/hal-00760577}.

\bibitemdeclare{inproceedings}{Plump:1995:TGR:647676.731819}
\bibitem{Plump:1995:TGR:647676.731819}
\bibinfo{author}{D.~\surnamestart Plump\surnameend} (\bibinfo{year}{1995}):
  \emph{\bibinfo{title}{On Termination of Graph Rewriting}}.
\newblock In: {\sl \bibinfo{booktitle}{Proceedings of the 21st International
  Workshop on Graph-Theoretic Concepts in Computer Science}},
  \bibinfo{series}{WG '95}, \bibinfo{publisher}{Springer-Verlag},
  \bibinfo{address}{London, UK}, pp. \bibinfo{pages}{88--100},
  \doi{10.1007/3-540-60618-1\_68}.

\bibitemdeclare{article}{Plump98a}
\bibitem{Plump98a}
\bibinfo{author}{D.~\surnamestart Plump\surnameend} (\bibinfo{year}{1998}):
  \emph{\bibinfo{title}{Termination of Graph Rewriting is Undecidable}}.
\newblock {\sl \bibinfo{journal}{Fundamenta Informaticae}}
  \bibinfo{volume}{33}(\bibinfo{number}{2}), pp. \bibinfo{pages}{201--209},
  \doi{10.3233/FI-1998-33204}.

\bibitemdeclare{proceedings}{dpo_spo}
\bibitem{dpo_spo}
\bibinfo{editor}{G.~\surnamestart Rozenberg\surnameend}, editor
  (\bibinfo{year}{1997}): \emph{\bibinfo{title}{Handbook of Graph Grammars and
  Computing by Graph Transformations, Volume 1: Foundations}}.
  \bibinfo{publisher}{World Scientific}.

\bibitemdeclare{book}{elements}
\bibitem{elements}
\bibinfo{author}{L.~\surnamestart Tesni\`ere\surnameend}
  (\bibinfo{year}{1959}): \emph{\bibinfo{title}{El\'ements de syntaxe
  structurale}}.
\newblock \bibinfo{publisher}{Librairie C. Klincksieck, Paris}.

\end{thebibliography}

\end{document}
